\title{Graph-based SLAM-Aware Exploration with Prior Topo-Metric Information
}
\author{Ruofei Bai$^{1, 2}$, Hongliang Guo$^2$, \emph{Member}, \emph{IEEE}, Wei-Yun Yau$^2$, \emph{Senior Member}, \emph{IEEE}, Lihua Xie$^1$, \emph{Fellow}, \emph{IEEE}
\thanks{$^1$Ruofei Bai and Lihua Xie are with the School of
Electrical and Electronic Engineering, Nanyang Technological University, Singapore 639798
(e-mail: \href{mailto:ruofei001@ntu.edu.sg}{ruofei001@ntu.edu.sg}; \href{mailto:elhxie@ntu.edu.sg}{elhxie@ntu.edu.sg})
}%
\thanks{$^2$Ruofei Bai, Hongliang Guo and Wei-Yun Yau are with the Institute for
Infocomm Research (I2R), Agency for Science, Technology and Research
(A*STAR), Singapore 138632
(e-mail: \href{mailto:stubair@i2r.a-star.edu.sg}{stubair@i2r.a-star.edu.sg}; \href{mailto:guo_hongliang@i2r.a-star.edu.sg}{guo\_hongliang@i2r.a-star.edu.sg}; \href{mailto:wyyau@i2r.a-star.edu.sg}{wyyau@i2r.a-star.edu.sg})}%
}
\begin{document}

\maketitle

\begin{abstract}

Autonomous exploration requires a robot to explore an unknown environment while constructing an accurate map using Simultaneous Localization and Mapping (SLAM) techniques.
Without prior information, the exploration performance is usually conservative due to the limited planning horizon. 
This paper exploits prior information about the environment, represented as a topo-metric graph, to benefit both the exploration efficiency and the pose graph reliability in SLAM.
Based on the relationship between pose graph reliability and graph topology, we formulate a SLAM-aware path planning problem over the prior graph, which finds a fast exploration path enhanced with the globally informative loop-closing actions to stabilize the SLAM pose graph.
A greedy algorithm is proposed to solve the problem, where theoretical thresholds are derived to significantly prune non-optimal loop-closing actions, without affecting the potential informative ones.
Furthermore, we incorporate the proposed planner into a hierarchical exploration framework, with flexible features including path replanning, and online prior graph update that adds additional information to the prior graph.
Simulation and real-world experiments indicate
that the proposed method can reliably achieve
higher mapping accuracy than compared
methods when exploring environments with rich topologies, while maintaining comparable exploration efficiency.
Our method has been open-sourced on GitHub.

\end{abstract}

\begin{IEEEkeywords}
Planning under uncertainty, SLAM, autonomous exploration
\end{IEEEkeywords}




\section{Introduction}

Autonomous exploration has been deemed as one of the important tasks for robotics, with applications to surveillance and inspection, search and rescue, etc.
Typically, a robot uses onboard sensors such as LiDAR or cameras to perceive an unknown environment and constructs an accurate map for downstream tasks, known as Simultaneous Localization and Mapping (SLAM).
In pose graph-based SLAM methods, the map can be constructed by registering onboard sensor readings at each pose into a global coordinate frame~\cite{grisetti_tutorial_2010, 9714804}. 
To get an accurate map of the environment, the robot should maintain accurate and reliable pose estimation during the exploration.

\begin{figure}
  \centering
  \includegraphics[width=0.4\linewidth]{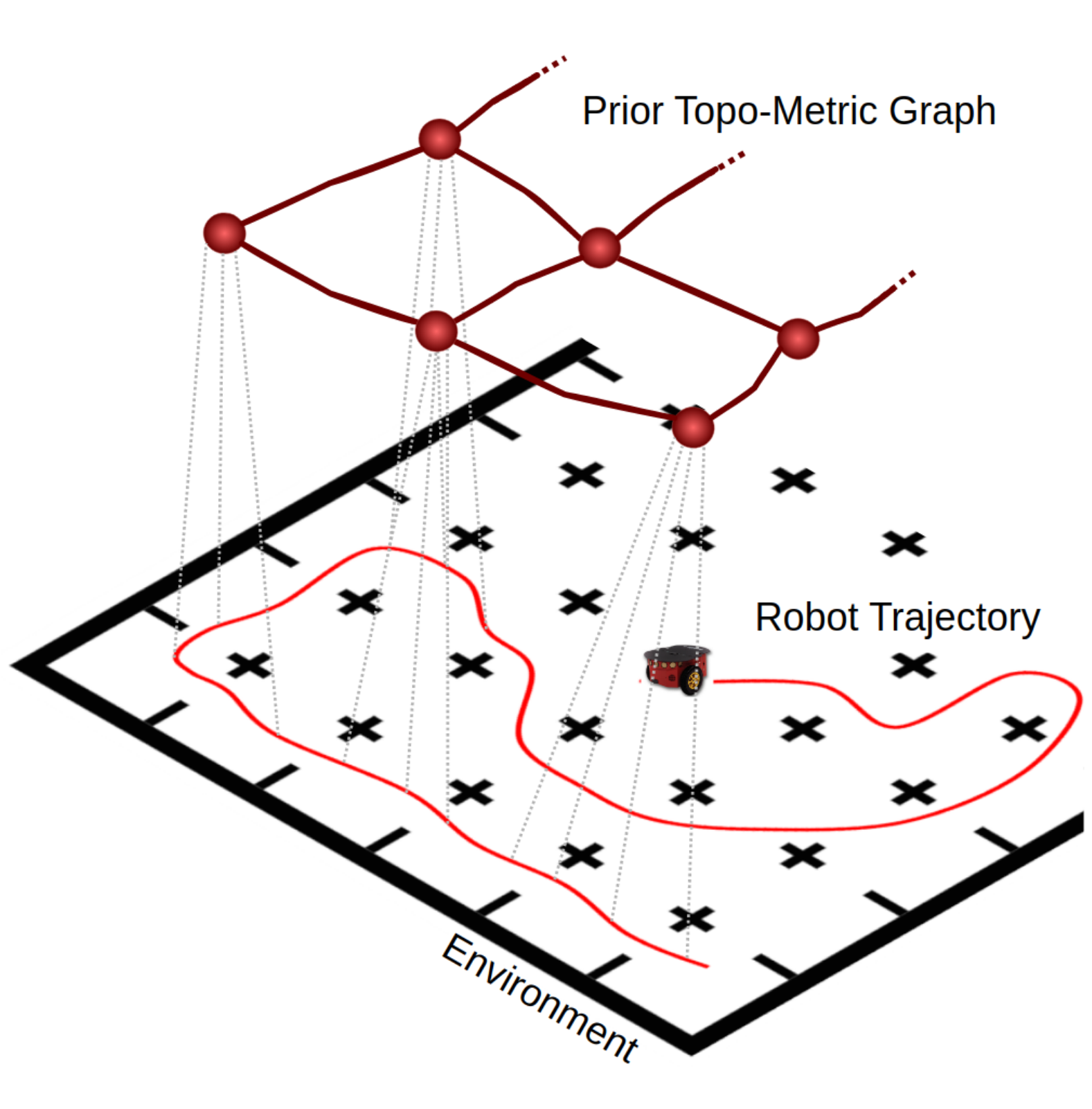}\quad
  \includegraphics[width=.55\linewidth]{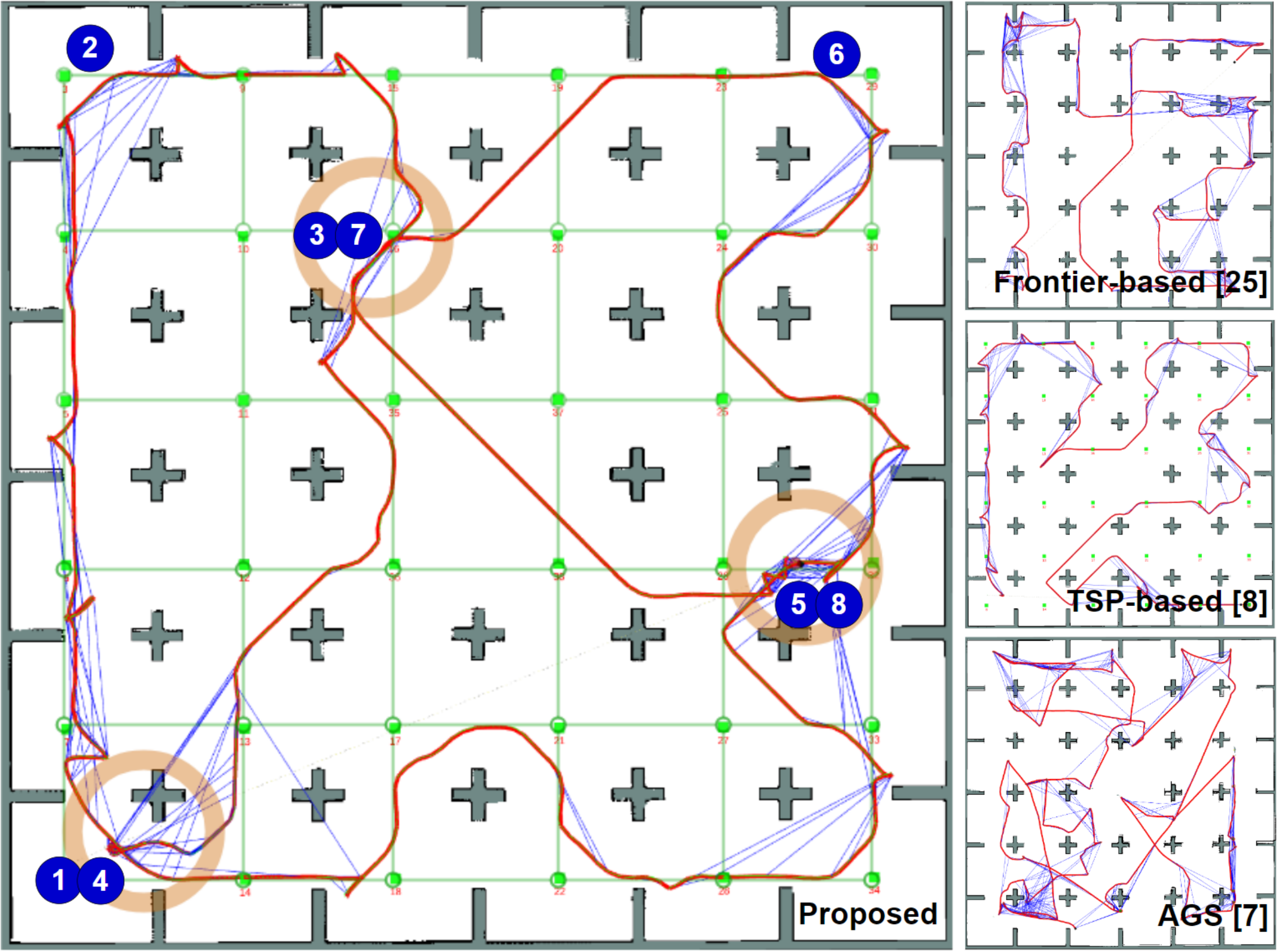}
  \caption{SLAM-Aware exploration with prior topo-metric graph. The right figures show the occupancy grid map built after exploration, the prior topo-metric graph (green), the SLAM pose graph with odometry edges (red) and loop closures (blue). 
  The numbers in blue circles indicate the order in which the robot visits different regions.
  Compared with other methods, the exploratory trajectory of the proposed method forms three big loops that quickly cover the environment while forming a globally reliable pose graph.}
  \label{fig_map_results}
  \vspace{-15pt}
\end{figure}

However, the pose estimation may accumulate drift as the robot explores the environment due to noise or environment degeneracy.
Active SLAM methods~\cite{Placed_survey_2023} are used to drive the robot to actively revisit some places to establish relative measurements, called \emph{loop closures}, to mitigate the cumulative drift in SLAM.
Exploration actions are usually selected based on their information gain, considering both the environment coverage and the robot pose uncertainty~\cite{stachniss_information_2005, Xu_CRMI_2021, carrillo_autonomous_2018, placed_fast_2021}.
However, due to the lack of prior information about the environment, 
they balance the two objectives only within a local planning horizon, thus being conservative in both exploration efficiency and finding informative loop closures.
In fact, the locally selected loop closures contribute less to mitigating the drift since the odometry drift increases with distance.

In this paper, we use the prior topological and metric information of the environment to improve the exploration performance of a robot. 
Such information can be obtained from hand-drawn maps, outdated floorplans, or previously built maps~\cite{oswald_speeding-up_2016, xue_active_2022}, and represented as a topo-metric graph to describe the positions of regions and their connectivity in the environment. 
Based on the relationship between the pose graph reliability and the graph topology~\cite{khosoussi_reliable_2019, chen_cramerrao_2021, placed_general_2023}, we formulate a SLAM-aware path planning problem over the prior graph, which finds a high-level path to guide the robot to quickly cover the environment while forming a well-connected pose graph for reliable SLAM estimation.
With the prior topo-metric graph, both exploration efficiency and pose graph reliability are effectively balanced from a global perspective compared to reactive methods, as illustrated in Fig.~\ref{fig_map_results}.

The contributions of this paper are summarized as follows: 
1) A SLAM-aware path planner over a prior topo-metric graph that explicitly considers both exploration efficiency and pose graph reliability. We derive theoretical thresholds that significantly prune non-optimal loop-closing actions to speed up an iterative greedy algorithm; 
2) A flexible hierarchical exploration framework that integrates the SLAM-aware path planner, with several features including path replanning, and online prior graph update that adds additional information to the prior graph.
Moreover, simulation and real-world experiments verify the effectiveness of the proposed method in various environments, where the proposed method reliably maintains higher mapping accuracy and comparable exploration efficiency than the compared methods.

\section{Related Works}
This section reviews the related works in autonomous exploration exploiting the prior information and recent works in relating pose graph uncertainty with the graph topology.

\subsection{Autonomous Exploration based on Prior Information}
In autonomous exploration, the robot usually applies active SLAM methods to decide its exploratory actions to balance the SLAM accuracy and the exploration efficiency.
A detailed review of active SLAM can be found in~\cite{Placed_survey_2023}.

In some applications, prior information about the environment is available, such as the floor plan, a topological sketch, etc.
Several works utilizing prior information have been proposed to improve exploration performance.
Luperto \emph{et al.}~\cite{luperto_robot_2020} use a floor plan of the environment to help more accurately evaluate the information gain of each frontier in the exploration.
Oßwald \emph{et al.}~\cite{oswald_speeding-up_2016} use a prior topo-metric graph of the environment to speed up the exploration, where a traveling salesman problem (TSP) solver finds a path over the prior topo-metric graph to guide the exploration behavior of the robot.
However, the pose uncertainty and map accuracy are not considered.
A similar situation is considered in~\cite{soragna_active_2019}, where a robot is required to traverse all edges of a prior topo-metric graph of the environment.
The distance of a robot's future path is scaled by the robot's pose uncertainty, which motivates the robot to actively close loops to reduce its pose uncertainty.
Xue \emph{et al.}~\cite{xue_active_2022} further consider the active SLAM over a prior topo-metric graph with the starting position unknown.
However, they do not consider the pose uncertainty during the exploration.
Different from existing works, this paper considers both the exploration efficiency and the pose graph reliability in a SLAM-aware path planning formulation, making full use of the prior topo-metric information of the environment.

\subsection{Graph-based SLAM Uncertainty Evaluation}
In graph-based SLAM methods~\cite{grisetti_tutorial_2010}, it is important to maintain a reliable pose graph, which reflects the accuracy of the built map in the SLAM process~\cite{Deshpande_lighthouse_2023}.
To efficiently quantify the reliability of the SLAM pose graph, 
an increasing number of works including~\cite{khosoussi_novel_2014, khosoussi_reliable_2019, chen_cramerrao_2021} and~\cite{placed_general_2023, placed_fast_2021} use the graph topology metric to evaluate the estimation uncertainty in the pose graph optimization.
Specifically, Khosoussi \emph{et al.}~\cite{khosoussi_reliable_2019} rearrange the FIM into the translational part and the rotational part and prove that each part is individually related to the pose graph Laplacian matrix weighted by the corresponding covariance matrices. 
Chen \emph{et al.}~\cite{chen_cramerrao_2021} further extend this work into 3D case.
On the other hand, 
Placed \emph{et al.}~\cite{placed_general_2023} directly encapsulate the covariance matrix of each observation into the edge weight of the Laplacian matrix to more efficiently evaluate the pose graph reliability.

The above topology-based metrics have been applied to evaluate potential loop closures in robot exploration~\cite{chen_online_2019, placed_fast_2021}.
However, the metric is either evaluated in a reactive manner, or limited in frontier evaluation.
Instead, we use the above relationship to find a distance-efficient path with globally informative loop closures over the prior topo-metric graph, which guides the robot to form a well-connected pose graph for reliable SLAM estimation.

\section{Preliminaries}
\subsection{Graph Laplacian}

Given a connected graph $\mathcal{G}$ with $n+1$ vertices and $m$ edges, the graph Laplacian matrix is defined as:
\begin{equation}
    \Lp^{\circ} = \mathbf{B}^{\circ}\mathbf{B}^{\circ\top} = \sum\nolimits_{j = 1}^{m} \mathbf{B}_j \mathbf{B}_j^{\top}\in \mathbb{R}^{(n+1)\times (n+1)},
\end{equation}
where $\mathbf{B}^{\circ}\in \mathbb{R}^{(n+1) \times m}$ is the incidence matrix of $\mathcal{G}$, $\mathbf{B}_j$ is the $j$-th column vector of $\mathbf{B}^{\circ}$ that only has non-zero values at two indices corresponding to the vertices connected by the $j$-th edge.
Each row of $\mathbf{B}^{\circ}$ corresponds to a vertex and each column corresponds to an edge in $\mathcal{G}$.
The weighted Laplacian matrix of the graph $\mathcal{G}$ is defined as $\Lp^{\circ}_{\gamma} = 
    \sum_{j = 1}^{m} \gamma_{j} \mathbf{B}_{j}\mathbf{B}_{j}^{\top}$, 
where $\gamma_j$ is the weight of $j$-th edge in $\mathcal{G}$.

If one vertex is anchored in $\mathcal{G}$, i.e., the corresponding row is removed from the incidence matrix $\mathbf{B}^{\circ}$, the obtained Laplacian matrix is called the reduced Laplacian matrix, denoted as $\Lp \in \mathbb{R}^{n \times n}$.
The weighted reduced Laplacian matrix $\Lp_{\gamma}$ can also be defined similarly.

\subsection{Relating Pose Graph Reliability with Graph Laplacian}
\label{sec_relationship}

In graph-based SLAM algorithms~\cite{grisetti_tutorial_2010}, a pose graph is incrementally built, denoted as $\mathcal{G}^{\text{pose}} = \langle \mathcal{X}, \mathcal{Z} \rangle$, where each node in $\mathcal{X}$ corresponds to a robot pose, and an edge $\zij \in \mathcal{Z}$ represents the relative motion between two poses $x_i$ and $x_j$, established by either odometry measurements or loop closure detection.
The pose graph optimization aims to find the maximum likelihood estimation (MLE) of all poses $\mathcal{X}$, given all relative observations $\mathcal{Z}$.
The objective function in pose graph optimization is defined as $\min_{\mathcal{X}} \mathbf{F}(\mathcal{X}) = \sum_{\zij\in \mathcal{Z}} \mathbf{F}_{ij}(\mathcal{X}) = \sum_{\zij\in \mathcal{Z}} \eij^{\top}\mathbf{\Sigma}^{-1}_{ij}\boldsymbol{e}_{ij}$, 
where $\boldsymbol{e}_{ij}= \zij - \hat{z}_{ij}(x_i, x_j)$ is the residual error between the predicted relative observation $\hat{z}_{ij}$ and the actual observation $\zij$; $\mathbf{\Sigma}_{ij}$ is the covariance matrix of $\zij$.
The poses in $\mathcal{X}$ can be optimized using nonlinear optimization methods like the Gauss-Newton method.
The Hessian matrix $\mathbf{H}$ in pose graph optimization is derived as
\begin{equation}
\small
\raisebox{0.5ex}{$\mathbf{H} 
= \frac{1}{2} \sum\nolimits_{\zij \in \mathcal{Z}} \mathbf{J}_{ij} ^{\top} \boldsymbol{\Sigma}^{-1}_{ij} \mathbf{J}_{ij}
= \frac{1}{2} \sum\nolimits_{\zij \in \mathcal{Z}} \mathbf{B}_{ij}\mathbf{B}_{ij}^{\top} \otimes \widetilde{\boldsymbol{\Sigma}}_{ij}^{-1}$},
\label{eq_hessian}
\end{equation}
where $\mathbf{J}_{ij}$ is the Jacobian matrix of the error function $\mathbf{F}_{ij}$ w.r.t. $\mathcal{X}$,  $\widetilde{\boldsymbol{\Sigma}}_{ij}$ is the covariance matrix of the observation $\zij$ with coordinate transformation, and $\mathbf{B}_{ij}\mathbf{B}_{ij}^{\top}$ is the Laplacian factor of the edge $\zij$ in the pose graph.
As $\mathbf{F}_{ij}$ only relates to $x_i$ and $x_j$, the corresponding $\mathbf{J}_{ij}$ only has non-zero values at indices $i$ and $j$, which shares a similar structure as the column vector $\mathbf{B}_{ij}$ of the incidence matrix of the pose graph.
The details of the derivation are referred to~\cite{placed_general_2023}.

The Hessian matrix is also known as the \emph{observed} FIM~\cite{khosoussi_novel_2014}, denoted as $\mathbb{I}(\mathcal{X})$.
In practice, the observed FIM $\mathbb{I}(\mathcal{X})$ computed at the MLE of $\mathcal{X}$ is quantified to reflect the uncertainty of $\mathcal{X}$, using scaler functions of $\mathbb{I}(\mathcal{X})$, among which the D-optimal is shown to be superior in capturing the global uncertainty in all dimensions of the poses~\cite{carrillo_comparison_2012}.
The D-optimal is a scalar function from Theory of Optimal Experimental Design~\cite{pazman1986foundations} and is defined as $\dopt{\Lp} = \det(\Lp)^{\frac{1}{n}}$ for an $n\times n$ matrix.
According to~\cite{placed_general_2023}, 
the D-optimal of $\mathbb{I}(\mathcal{X})$ can be approximated by:
\begin{equation}
\small
\begin{aligned}
&\dopt{\mathbb{I}(\mathcal{X})} 
=
\dopt{\sum\nolimits_{\zij\in\mathcal{Z}} \mathbf{B}_{ij} \mathbf{B}_{ij}^{\top} \otimes \widetilde{\boldsymbol{\Sigma}}_{ij}^{-1}} \\
\approx &
\dopt{\sum\nolimits_{\zij\in\mathcal{Z}} \mathbf{B}_{ij} \mathbf{B}_{ij}^{\top} \cdot \dopt{\widetilde{\boldsymbol{\Sigma}}_{ij}^{-1}} } 
= 
\dopt{\Lp_{\gamma}}.
\end{aligned}
\label{eq_relationship}
\end{equation}
The subscript $\gamma$ in $\Lp_{\gamma}$ indicates that the reduced Laplacian matrix of the pose graph is weighted by the D-optimal of the inverse of the covariance matrices for each edge in the pose graph.
Compared with the Hessian matrix $\mathbf{H}$, the weighted Laplacian matrix $\mathbf{L}_{\gamma}$ has smaller dimensions that only depend on the number of poses, and is more efficient in evaluating the pose graph reliability.
Note that Eq.~(\ref{eq_relationship}) establishes the relationship between the pose graph reliability and graph topology, and the latter will be used in this paper for SLAM uncertainty evaluation.

\section{SLAM-Aware Path Planning Over Prior Graph}
\label{sec_problem_formulation}

The key idea of this paper is to find a high-level path over the prior topo-metric graph of the environment, to guide the low-level exploration behavior of the robot to achieve quick coverage while forming a well-connected pose graph for reliable SLAM estimation. 
The proposed hierarchical exploration framework is shown in Fig.~\ref{fig_framework} and will be introduced in detail in Sec.~\ref{sec_hierarchical_framework}.
This section focuses on the high-level planning part.
We formulate the SLAM-aware path planning problem based on the prior graph, and propose the greedy algorithm with effective pruning mechanisms to solve it.

\begin{figure}[t]
\vspace{1pt}
\centering\includegraphics[width=\columnwidth]{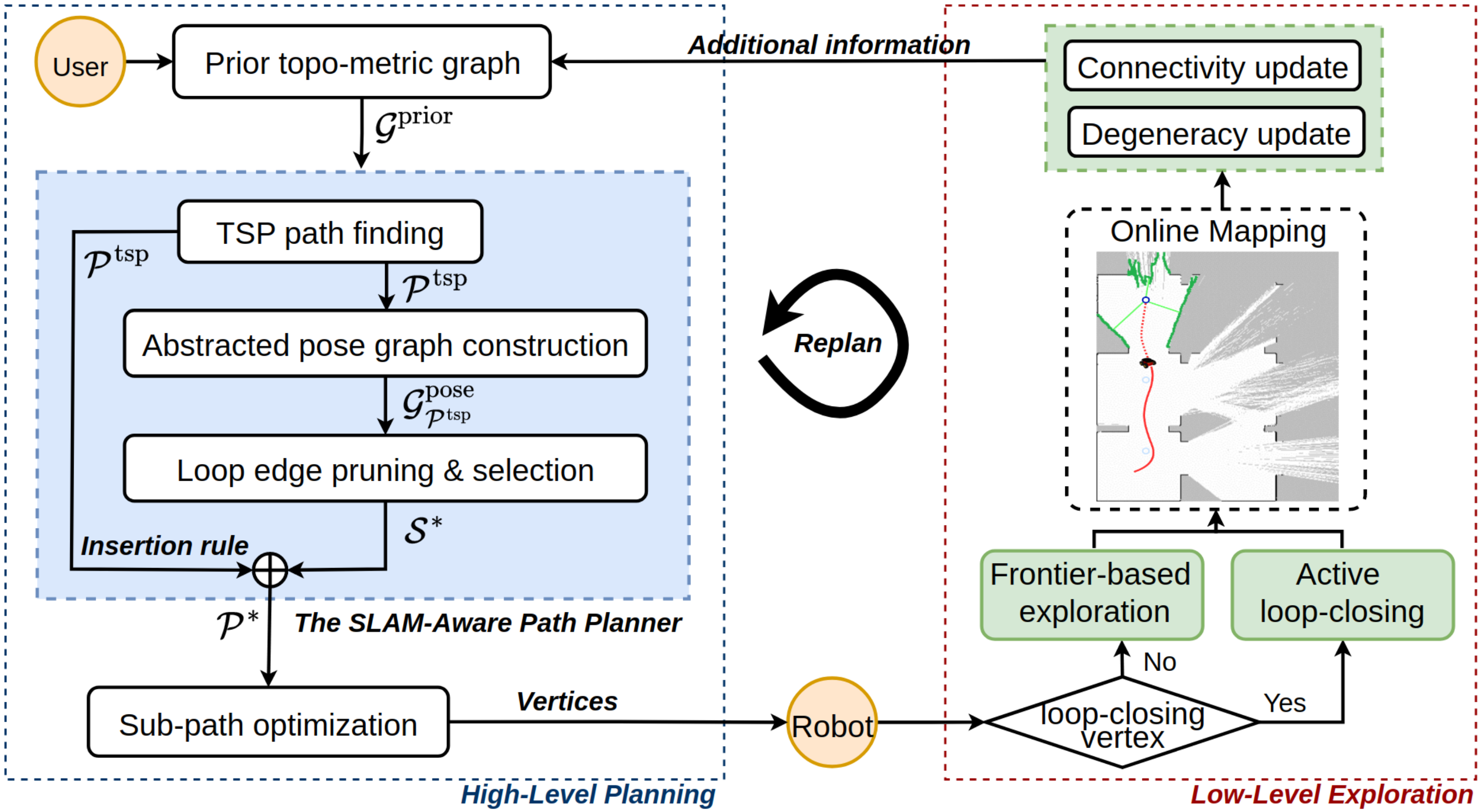}
	\caption{The diagram of the proposed hierarchical exploration framework, including high-level planning and low-level navigation. The SLAM-aware path planner takes the prior topo-metric graph $\gprior$ from users and outputs a high-level path $\mathcal{P}^{*}$. The robot follows vertices in $\mathcal{P}^{*}$ to perform either active loop-closing or frontier-based exploration within a region. The prior graph is updated online, based on which the replanning and sub-path optimization modules opportunistically find a better path for exploration.} 
 \label{fig_framework}
 \vspace{-15pt}
\end{figure}

\subsection{Representation of the Prior Topo-Metric Information}
The prior knowledge about the coarse structure of an environment can be represented by a prior topological-metric graph, defined as $\gprior = \langle \mathcal{V}, \mathcal{E}, \omega\rangle$, where $\mathcal{V}\subseteq \mathbb{R}^{2}/\mathbb{R}^{3}$ includes set of vertices distributed in the environment, where each vertex corresponds to a convex region to be explored, like room or corridor, etc.
Generally, one vertex per convex region is sufficient~\cite{oswald_speeding-up_2016}.
The set $\mathcal{E} \subseteq \mathcal{V}\times \mathcal{V}$ includes all edges, where an edge $\left (v_i, v_j\right ) \in \mathcal{E} $ indicates there exists a traversable path between $v_i $ and $v_j $.
$\omega(v_i, v_j)$ denotes the distance between $v_i$ and $v_j$ in $\gprior$.
We assume the prior graph is connected, and the starting pose of the robot w.r.t. $\gprior$ is given initially. 
The starting vertex of the robot is denoted as $v_{\text{start}}$.

\subsection{Two-Stage Strategy for SLAM-Aware Path Planning}
\label{sec_offline_formulation}

Given the prior topo-metric graph $\gprior$, the problem is to find a walk $\mathcal{P}$ over $\gprior$ that covers all vertices in $\gprior$, while forming a well-connected pose graph topology for reliable SLAM estimation.
The problem is similar to solving a traveling salesman problem (TSP), which finds the shortest path to visit each vertex in a graph once and return to the starting vertex, and it is NP-hard to solve~\cite{JUNGER1995225}.
While the TSP only considers the distance cost, we need to further consider the reliability of the pose graph induced by the path.
To reduce the computational complexity, we propose the following two-stage strategy to solve the problem:
\begin{enumerate}
    \item[(1)] Find an exploration path starting from $v_{\text{start}}$ to quickly cover all vertices in $\gprior$, where existing TSP solvers can be used;
    \item[(2)] Select and insert informative and distance-efficient loop-closing actions along the exploration path to improve the pose graph reliability.
\end{enumerate}

Specifically, to construct a TSP instance in Stage (1), the prior graph $\gprior$ is first transformed into a complete graph, where the inter-vertex distance is defined as the shortest path length between the two vertices in $\gprior$.
Existing TSP solvers can then be used to find the shortest path over the complete graph starting from $v_{\text{start}}$ to visit all vertices in $\gprior$ exactly once.
Note the original TSP definition has additional returning constraints, which can be eliminated by resetting the distance of all incoming edges of $v_{\text{start}}$ as zero and deleting the final vertex in the return TSP path, which is also known as Open-loop TSP~\cite{chieng2014performance}.
After obtaining the TSP path, we connect consecutive vertices in the path with the shortest path between them in $\gprior$, to recover the actual path that follows the connectivity constraints of $\gprior$.
The finally obtained path, which is actually a walk over $\gprior$, is denoted as $\mathcal{P}^{\text{tsp}} = ( v_{i_{1}}, ...,  v_{i_{|\mathcal{P}^{\text{tsp}}|}} )$,
where $\forall~k\in [1:|\mathcal{P}^{\text{tsp}}|-1]$, $v_{i_{k}}\in \mathcal{V}$, $(v_{i_{k}}, v_{i_{k+1}})\in \mathcal{E}$; and $v_{i_{1}} = v_{\text{start}}$.
The walk $\mathcal{P}^{\text{tsp}}$ visits all vertices in $\mathcal{V}$ and ensures complete coverage of the environment.

\subsection{Loop Edge Selection over Abstracted Pose Graph}

As the obtained walk $\mathcal{P}^{\text{tsp}}$ only aims to quickly cover vertices in $\gprior$, Stage (2) further improves the resulting pose graph reliability of the path, by adding informative and distance-efficient loop-closing actions along $\mathcal{P}^{\text{tsp}}$.
However, while the exploration efficiency can be directly quantified by the distance cost of $\ptsp$, the pose graph reliability can not be evaluated directly without the actual pose graph constructed in SLAM.
On the positive side, in our proposed framework, the walk $\ptsp$ over $\gprior$ captures the abstracted structure of the robot's actual pose graph, because the robot will follow the high-level guidance of $\ptsp$ to explore the environment.
Therefore, we define the \emph{abstracted} pose graph from $\ptsp$ to provide a high-level approximation of the robot's actual pose graph, which will be used to identify informative loop-closing actions along the walk $\mathcal{P}^{\text{tsp}}$ to reducing the SLAM uncertainty.\footnote{A similar idea of high-level abstraction of the pose graph is proposed in hierarchical pose graph optimization~\cite{grisetti_hierarchical_2010}, where the poses in the robot's actual pose graph are clustered to construct an abstracted pose graph, and then only the abstracted pose graph is optimized to facilitate efficiency.}.

\begin{definition}[Abstracted pose graph]
Given a walk $\mathcal{P}$ over a graph $\gprior = \langle \mathcal{V}, \mathcal{E}, \omega \rangle$, an abstracted pose graph corresponding to $\mathcal{P}$ is defined as $\gpose_{\mathcal{P}} = \langle \mathcal{X}, \mathcal{Z} \rangle$, where $\mathcal{X}\subseteq \operatorname{SE}(2)/\operatorname{SE}(3)$ and $\mathcal{Z}\subseteq \mathcal{X}\times \mathcal{X}$.
The poses in $\mathcal{X}$ have a one-to-one correspondence with vertices in $\mathcal{V}$ covered by $\mathcal{P}$, and an edge $( x_i, x_j )\in \mathcal{Z}$ exists if the edge $(\mathcal{M}(x_i), \mathcal{M}(x_j)) \in \mathcal{E}$ and is covered by $\mathcal{P}$, where $\mathcal{M}: \mathcal{X}\xrightarrow{} \mathcal{V}$ is a mapping function from $\mathcal{X}$ to $\mathcal{V}$.
The subscripts of the poses in $\mathcal{X}$ are assigned according to the order in which their corresponding vertices are visited by $\mathcal{P}$.
\label{def_abstract_pose_graph}
\end{definition}

By following Def.~\ref{def_abstract_pose_graph}, the abstracted pose graph corresponding to $\mathcal{P}^{\text{tsp}}$ is constructed and denoted as $\gpose_{\ptsp}$.
The repeated visits of vertices and edges in $\gprior$ by $\ptsp$ are only counted once for simplicity, so $\gpose_{\ptsp}$ has the same topology as a subgraph of $\gprior$ covered by $\ptsp$.
Note that the orientation of poses is not explicitly defined in Def.~\ref{def_abstract_pose_graph}, because the exact values of pose orientations do not affect the evaluation of SLAM uncertainty, as suggested in Eq.~(\ref{eq_relationship}).

\begin{remark}
If the edge lengths in the original $\gprior$ vary significantly, we can add additional vertices along the long edges in $\gprior$ to make them balanced for better approximation of the abstracted pose graph.
Meanwhile, the connectivity information in $\gprior$ remains unchanged.
\end{remark}

With the abstracted pose graph $\gpose_{\mathcal{P}^{\text{tsp}}}$, we then identify candidate loop-closing actions, called \emph{loop edges}, along the walk $\ptsp$, and select the most informative and distance-efficient ones to be inserted into the walk to improve the pose graph reliability.
The candidate loop edges in $\gpose_{\mathcal{P}^{\text{tsp}}}$ are defined as follows:

\begin{definition}[Loop edge]
A loop edge connects two poses that are not directly connected in $\gpose_{\mathcal{P}^{\text{tsp}}}$. Given $\gpose_{\mathcal{P}^{\text{tsp}}} = \langle \mathcal{X}, \mathcal{Z} \rangle$, the set of all candidate loop edges is defined as $\mathcal{S} = \{z_{ij} = ( x_i, x_j) \mid x_i, x_j \in \mathcal{X}; i > j; \zij \notin \mathcal{Z}\}$.
\label{def_loop_edge}
\end{definition}

A loop edge is not a loop closure, but a continuous movement to connect two poses in the abstracted pose graph, to improve the pose graph reliability.
If a loop edge $(x_i, x_j)$ is selected, the vertex $\mathcal{M}(x_i)$ in $\mathcal{P}^{\text{tsp}}$ will be replaced by $\left(\mathcal{M}(x_i), \mathcal{M}(x_j), \mathcal{M}(x_i)\right)$, and we refer this operation as \emph{insertion rule}.
It means that after the robot covers $\mathcal{M}(x_i)$, it will move from $\mathcal{M}(x_i)$ to $\mathcal{M}(x_j)$ to establish loop closures and then return to $\mathcal{M}(x_i)$ to continue exploration.
The vertex $\mathcal{M}(x_j)$ is called a loop-closing vertex.
With such, the extra distance cost introduced by the loop edge $(x_i, x_j)$ can be conveniently counted as $2\cdot \omega(\zij)$, here we abuse the notation $\omega(\cdot)$ and let $\omega(\zij)$ denotes the shortest distance between $\mathcal{M}(x_i)$ and $\mathcal{M}(x_j)$ in $\gprior$.
Note that in actual exploration, the robot will skip previously visited vertices as introduced in Sec.~\ref{sec_hierarchical}.

The loop edge selection problem is formulated as follows:

\begin{problem}
Given a prior topo-metric graph $\gprior$, an initial walk $\mathcal{P}^{\text{tsp}}=(v_{i_{1}}, ..., v_{i_{|\mathcal{P}^{\text{tsp}}|}})$ over $\gprior$, and a set $\mathcal{S}$ of candidate loop edges, find a subset $\mathcal{S}' \subseteq \mathcal{S}$ that maximize the following objective function:
\begin{equation*}
\small
     \mathcal{J}(\mathcal{S}') 
    = \frac{D\text{-}opt\left (\Lp_{\gamma}(\mathcal{P}^{\text{tsp}}) + \sum_{\zij \in \mathcal{S}'} \gamma_{ij} \cdot \mathbf{B}_{ij} \mathbf{B}_{ij}^{\top}  \right)}{\mathcal{D}(\mathcal{P}^{\text{tsp}}) + 2\cdot \sum_{\zij \in \mathcal{S}'} \omega(\zij)},
\end{equation*}
where $\Lp_{\gamma}(\mathcal{P}^{\text{tsp}})$ is the reduced Laplacian matrix of the abstracted pose graph $\gpose_{\mathcal{P}^{\text{tsp}}}$;
$\mathbf{B}_{ij} \mathbf{B}_{ij}^{\top}$ is the Laplacian factor corresponding to the loop edge $\zij$ in $\gpose_{\ptsp}$; $\gamma_{ij}$ is the weight of the edge $\zij$\footnote{In initial prior graph, all edges have the same covariance matrices, and will be updated online during exploration, as introduced in Sec.~\ref{sec_hierarchical}. The initial value is set as $\operatorname{Diag}\{0.1m, 0.1m, 0.001 rad\}$ in the 2D experiments.}; 
$\mathcal{D}(\ptsp) = \sum_{k = 1}^{|\ptsp|-1} \omega(v_{i_{k}}, v_{i_{k+1}})$ is the distance cost of $\ptsp$; and $2\cdot \sum_{\zij \in \mathcal{S}'} \omega(\zij)$ is the extra distance cost introduced by the selected loop edges in $\mathcal{S}'$.
\label{problem_2}
\end{problem}

The Problem~\ref{problem_2} aims to find loop edges that balance the exploration distance and the pose graph reliability, with the numerator of the objective function $\mathcal{J}(\cdot)$ being the topology metric that quantifies the estimation uncertainty in SLAM, when adding several loop edges into the abstracted pose graph.
An intuitive interpretation of $\mathcal{J}(\cdot)$ is that the extra distance traveled by the robot must contribute to reducing the averaged uncertainty of the pose graph, otherwise the loop-closing actions will not be performed.
Here the TSP path $\mathcal{P}^{\text{tsp}}$ provides a natural lower bound for the objective function $\mathcal{J}(\cdot)$ for further improvement.
The algorithm to solve Problem~\ref{problem_2} will be introduced in the next subsection.

\subsection{Greedy Solution with Pruning Thresholds}
\label{sec_greedy_filtering}
This section presents a greedy-based iterative algorithm to solve Problem~\ref{problem_2}. 
We derive the pruning thresholds to speed up the greedy algorithm, which prune non-optimal candidate loop edges beforehand without affecting the informative ones. 
The proposed greedy algorithm with the pruning mechanism is summarized in Alg.~\ref{alg_greedy}.

Specifically, in each iteration, the greedy algorithm selects the loop edge that has the maximum contribution to the objective function $\mathcal{J}$, adds it into $\mathcal{S}'$, and then re-computes the contribution of all remaining candidate edges based on the updated $\mathcal{S}'$.
The algorithm terminates until no candidate loop edge can further improve the objective value.
The greedy algorithm is time-consuming as it needs to re-evaluate the contribution of all candidate edges in each iteration.
Assume we already have a set $\mathcal{S}'$ of selected loop edges, and let $\mathcal{P}'$ denote the obtained path by inserting the selected loop edges into $\mathcal{P}^{\text{tsp}}$ following the insertion rule.
Now consider adding another loop edge $\zij$ into $\mathcal{S}'$, the objective will be:
\begin{equation*}
\small
\begin{aligned}
    &\mathcal{J}(\mathcal{S}' \cup \{\zij\}) 
    = 
    \frac{\dopt{\Lp_{\gamma}(\mathcal{P}') + \gamma_{ij} \cdot \mathbf{B}_{ij} \mathbf{B}_{ij}^{\top}}}{\mathcal{D}(\mathcal{P}') + 2\omega(\zij)} \\
    &=
    \mathcal{J}(\mathcal{S}') \cdot 
    \frac{\left(1 + \gamma_{ij} \cdot \mathbf{B}_{ij}^{\top} \Lp_{\gamma}^{-1}(\mathcal{P}') \mathbf{B}_{ij} \right)^{\frac{1}{n}}}{\left( 1 + \frac{2\omega(\zij)}{\mathcal{D}(\mathcal{P}')}\right)} 
= 
    \mathcal{J}(\mathcal{S}') \cdot \delta(\zij, \mathcal{P}').
\end{aligned}
\label{eq_edge_contribution}
\end{equation*}
Here the incremental term $\delta(\zij, \mathcal{P}')$ quantifies the contribution of the loop edge $\zij$ to the objective $\mathcal{J}$.
It depends on the current $\mathcal{P}'$ that changes in each iteration.
As introduced before, the set $\mathcal{S}$ includes edges with the order of $\mathcal{O}(|\mathcal{V}|^2)$.
If the greedy algorithm terminates in $k$ iterations, we have to evaluate edges of order $\mathcal{O}(k\cdot |\mathcal{V}|^2)$ in total.
To reduce the time complexity, we establish the following proposition to prune non-optimal loop edges from $\mathcal{S}$.

\begin{proposition}
    Under the assumption $\mathcal{D}(\mathcal{P}') \le 2\cdot \mathcal{D}(\mathcal{P}^{\text{tsp}})$\footnote{The assumption $\mathcal{D}(\mathcal{P}') \le 2\cdot \mathcal{D}(\mathcal{P}^{\text{tsp}})$ is reasonable because in the worst case, the robot can go back to the starting position by following exactly the initial path $\pp^{\text{tsp}}$.The algorithm should find a better path than the worst case. The assumption is also verified to hold in our experiments.}, if a loop edge $\zij\in \mathcal{S}$ satisfies
\begin{equation}
\small
    \frac{\left(1 + \gamma_{ij} \cdot \mathbf{B}_{ij}^{\top} \Lp^{-1}_{\gamma}(\mathcal{P}^{\text{tsp}}) \mathbf{B}_{ij} \right)^{\frac{1}{n}}}{1 + \frac{\omega(\zij)}{\mathcal{D}(\mathcal{P}^{\text{tsp}})}}
    \le
    1,
\label{eq_add_edge_constraint}
\end{equation}
then for $\mathcal{S}' \subseteq \mathcal{S}$, it holds that
$
\mathcal{J}(\mathcal{S}' \cup \{\zij\})
\le 
\mathcal{J}(\mathcal{S}').
$
\label{proposition_1}
\end{proposition}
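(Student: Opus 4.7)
}

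The plan is to show that the pruning condition in Eq.~(\ref{eq_add_edge_constraint}), evaluated once on the initial TSP path $\mathcal{P}^{\text{tsp}}$, dominates the incremental factor $\delta(z_{ij},\mathcal{P}')$ at every later iteration. Equivalently, I will prove that the numerator of $\delta$ can only shrink and the denominator can only grow as loop edges are added, so an edge that fails the threshold on $\mathcal{P}^{\text{tsp}}$ will keep failing on any refined path $\mathcal{P}'$.

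First, I will bound the numerator using the monotonicity of the weighted reduced Laplacian. Since $\mathcal{P}'$ is obtained from $\mathcal{P}^{\text{tsp}}$ by inserting loop edges, the multiset of Laplacian factors underlying $\mathbf{L}_\gamma(\mathcal{P}')$ is a superset of that of $\mathbf{L}_\gamma(\mathcal{P}^{\text{tsp}})$, each factor $\gamma \mathbf{B}\mathbf{B}^\top$ being positive semidefinite. Hence $\mathbf{L}_\gamma(\mathcal{P}') \succeq \mathbf{L}_\gamma(\mathcal{P}^{\text{tsp}}) \succ 0$ (both are positive definite reduced Laplacians of connected graphs). Inversion reverses the L\"{o}wner order on the positive definite cone, giving $\mathbf{L}_\gamma^{-1}(\mathcal{P}') \preceq \mathbf{L}_\gamma^{-1}(\mathcal{P}^{\text{tsp}})$, and therefore
\begin{equation*}
\mathbf{B}_{ij}^\top \mathbf{L}_\gamma^{-1}(\mathcal{P}') \mathbf{B}_{ij} \le \mathbf{B}_{ij}^\top \mathbf{L}_\gamma^{-1}(\mathcal{P}^{\text{tsp}}) \mathbf{B}_{ij}.
\end{equation*}
Since $x \mapsto (1+\gamma_{ij} x)^{1/n}$ is monotone non-decreasing for $x \ge 0$, the numerator of $\delta(z_{ij},\mathcal{P}')$ is upper bounded by its counterpart evaluated on $\mathcal{P}^{\text{tsp}}$.

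Next, I will bound the denominator using the stated assumption $\mathcal{D}(\mathcal{P}') \le 2\,\mathcal{D}(\mathcal{P}^{\text{tsp}})$. This immediately yields
\begin{equation*}
\frac{2\,\omega(z_{ij})}{\mathcal{D}(\mathcal{P}')} \ge \frac{2\,\omega(z_{ij})}{2\,\mathcal{D}(\mathcal{P}^{\text{tsp}})} = \frac{\omega(z_{ij})}{\mathcal{D}(\mathcal{P}^{\text{tsp}})},
\end{equation*}
so the denominator of $\delta(z_{ij},\mathcal{P}')$ is lower bounded by that of the left-hand side of Eq.~(\ref{eq_add_edge_constraint}). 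Combining the two bounds gives $\delta(z_{ij},\mathcal{P}') \le 1$ whenever Eq.~(\ref{eq_add_edge_constraint}) holds, and since $\mathcal{J}(\mathcal{S}' \cup \{z_{ij}\}) = \mathcal{J}(\mathcal{S}')\cdot \delta(z_{ij},\mathcal{P}')$ and $\mathcal{J}(\mathcal{S}')\ge 0$, the desired inequality $\mathcal{J}(\mathcal{S}'\cup\{z_{ij}\})\le \mathcal{J}(\mathcal{S}')$ follows.

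The only nontrivial step is the monotonicity argument on the numerator, and even that reduces to two textbook facts (PSD monotonicity under addition of rank-one terms and order reversal under matrix inversion). I expect the main subtlety to be purely expository: making clear that inserting loop edges via the insertion rule only augments the Laplacian of the abstracted pose graph (the revisited odometry edges are counted once, per Def.~\ref{def_abstract_pose_graph}, so nothing is removed), and that the assumption on $\mathcal{D}(\mathcal{P}')$ is precisely what converts the ``$2\omega$'' cost on the actual iteration into the ``$\omega$'' appearing in the pre-computable threshold.
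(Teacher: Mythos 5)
Your proposal is correct and follows essentially the same route as the paper's proof: bound the quadratic form via $\mathbf{L}_\gamma(\mathcal{P}')\succeq\mathbf{L}_\gamma(\mathcal{P}^{\text{tsp}})$ and order reversal under inversion (the paper cites Lemmas~6 and~9 of~\cite{khosoussi_reliable_2019} for exactly these two facts), then bound the denominator with the assumption $\mathcal{D}(\mathcal{P}')\le 2\,\mathcal{D}(\mathcal{P}^{\text{tsp}})$, and combine. No meaningful differences.
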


\begin{proof}
Since the reduced Laplacian matrix $\Lp_{\gamma}$ is positive definite, its inverse $\Lp_{\gamma}^{-1}$ is also positive definite.
According to Lemma~$9$ of~\cite{khosoussi_reliable_2019}, 
for two positive definite matrix $\Lp_{\gamma}(\mathcal{P}')$ and $\Lp_{\gamma}(\mathcal{P}^{\text{tsp}})$, 
$\Lp_{\gamma}(\mathcal{P}') \succeq \Lp_{\gamma}(\mathcal{P}^{\text{tsp}})$ iff 
$\Lp_{\gamma}^{-1}(\mathcal{P}^{\text{tsp}}) \succeq \Lp_{\gamma}^{-1}(\mathcal{P}')$.
Since $\mathcal{P}^{\text{tsp}}$ is a subgraph of $\mathcal{P}'$, 
we have $\Lp_{\gamma}(\mathcal{P}') \succeq \Lp_{\gamma}(\mathcal{P}^{\text{tsp}})$ (Lemma~$6$ of~\cite{khosoussi_reliable_2019}), and thus $\Lp_{\gamma}^{-1}(\mathcal{P}^{\text{tsp}}) \succeq \Lp_{\gamma}^{-1}(\mathcal{P}')$.
It is obvious that $\gamma_{ij} \cdot \mathbf{B}_{ij}^{\top} \Lp^{-1}_{\gamma}(\mathcal{P}^{\text{tsp}}) \mathbf{B}_{ij} \ge \gamma_{ij} \cdot \mathbf{B}_{ij}^{\top} \Lp^{-1}_{\gamma}(\mathcal{P}') \mathbf{B}_{ij}$.
Therefore,  
\begin{equation*}
\small
\left(1 + \gamma_{ij} \mathbf{B}_{ij}^{\top} \Lp^{-1}_{\gamma}(\mathcal{P}^{\text{tsp}}) \mathbf{B}_{ij} \right)^{\frac{1}{n}} 
    \ge
\left(1 + \gamma_{ij} \mathbf{B}_{ij}^{\top} \Lp^{-1}_{\gamma}(\mathcal{P}')\mathbf{B}_{ij} \right)^{\frac{1}{n}}.
\label{eq_submodular}
\end{equation*}
Considering $\mathcal{D}(\mathcal{P}') \le 2\cdot \mathcal{D}(\mathcal{P}^{\text{tsp}})$, 
it holds that $\small
    1 + \frac{2\cdot \omega(\zij)}{\mathcal{D}(\mathcal{P}')} 
    \ge 
    1 + \frac{\omega(\zij)}{\mathcal{D}(\mathcal{P}^{\text{tsp}})}$.
Combining the above two inequalities and Eq.~(\ref{eq_add_edge_constraint}), it holds that $\delta(\zij, \mathcal{P}') < 1$, thus proves the Prop.~\ref{proposition_1}.
\end{proof}

With Prop.~\ref{proposition_1}, we can further establish a distance threshold $\omega_{\text{max}}$ to prune long loop edges from $\mathcal{S}$.
We first find the loop edge $\zij^{*} = \arg \max_{\zij} \left(1 + \gamma_{ij} \cdot \mathbf{B}_{ij}^{\top} \Lp^{-1}_{\gamma}(\mathcal{P}^{\text{tsp}}) \mathbf{B}_{ij} \right)^{\frac{1}{n}}$ with computational complexity $\mathcal{O}(\lvert \mathcal{V} \rvert^{2})$. 
And then the distance threshold $\omega_{\text{max}}$ satisfies
$\small \frac{\left(1 + \gamma_{ij}^{*} \cdot \mathbf{B}_{ij}^{*\top} \cdot \Lp^{-1}_{\gamma}(\mathcal{P}^{\text{tsp}}) \cdot \mathbf{B}_{ij}^{*} \right)^{\frac{1}{n}}}{1 + \frac{\omega_{\text{max}}}{\mathcal{D}(\mathcal{P}^{\text{tsp}})}}
=
1$ according to Prop.~\ref{proposition_1}.
If a loop edge $\zij\in \mathcal{S}$ has distance $\omega(\zij) > \omega_{\text{max}}$, it will always satisfy Eq.~(\ref{eq_add_edge_constraint}) and thus can be removed from $\mathcal{S}$ directly.
Furthermore, we can also use $\mathcal{P}'$ rather than $\mathcal{P}^{\text{tsp}}$ in Eq.~\ref{eq_add_edge_constraint} to update the pruning threshold (Line 10, Alg.~\ref{alg_greedy}).
The correctness can be proved similarly as in Prop.~\ref{proposition_1}.
The updated threshold is shown to be tighter and thus keeps pruning edges in greedy iterations, as in Fig.~\ref{fig_ratio}(c).

By applying Alg.~\ref{alg_greedy}, we can obtain a suboptimal set $\mathcal{S}^{*}$ of selected loop edges.
And then the final exploration path, denoted as $\mathcal{P}^{*}$, is constructed by inserting the loop edges in $\mathcal{S}^{*}$ into their corresponding positions in $\mathcal{P}^{\text{tsp}}$ following the insertion rule.
The obtained path $\mathcal{P}^{*}$ will provide high-level guidance to the robot to explore the entire environment in a hierarchical framework, which will be introduced in Sec.~\ref{sec_hierarchical_framework}.

\SetAlgoSkip{myskip}
\begin{algorithm}[t]
\small
\SetKwInOut{Input}{Input}\SetKwInOut{Output}{Output}
\SetKwInOut{Return}{Return}
\caption{GreedyLoopEdgeSelection}
\label{alg_greedy}
\Input{$\gprior$, $\mathcal{P}^{\text{tsp}}$, $\mathcal{S}$}
Find $\zij^{*}$ in $\mathcal{S}$, and calculate $\omega_{\text{max}}$.\\
$\forall \zij\in \mathcal{S}$, if $\omega(\zij) \ge \omega_{\text{max}}$, remove $\zij$ from $\mathcal{S}$ .\\
$\mathcal{S}' \xleftarrow{} \{\}$, $iter \xleftarrow{} 0$, $\mathcal{P}' \xleftarrow{} \mathcal{P}^{\text{tsp}}$.\\
\While{$true$}{
    \For{each $\zij \in \mathcal{S}$}{
        Compute $\delta(\zij, \mathcal{P}')$.\\
        Update Eq.~(\ref{eq_add_edge_constraint}) using $\mathcal{P}'$ rather than $\mathcal{P}^{\text{tsp}}$.\\
        \If{$\zij$ satisfies Eq.~(\ref{eq_add_edge_constraint})}{
            Remove $\zij$ from $\mathcal{S}$.\\
        }
    }
    Find $\zij^{\text{max}} \xleftarrow{} \arg \max_{\zij \in \mathcal{S}} \delta(\zij, \mathcal{P}')$.\\
    \eIf{$\delta(\zij^{\text{max}}, \mathcal{P}') > 1$}{
        Add $\zij^{\text{max}}$ into $\mathcal{S}'$, remove $\zij^{\text{max}}$ from $\mathcal{S}$.\\
        Update $\mathcal{P}' \xleftarrow{} \mathcal{P}^{\text{tsp}} \cup \mathcal{S}'$ following insertion rule.\\
        $iter \xleftarrow{} iter + 1$.\\
    }{
        Break.\\
    }
}
\KwRet{$\mathcal{S}^{*} \xleftarrow{} \mathcal{S}'$.}
\end{algorithm}

\section{Hierarchical Exploration and Replanning}
\label{sec_hierarchical_framework}

This section presents a hierarchical exploration framework incorporating the proposed SLAM-aware path planner to achieve SLAM-aware exploration, as shown in Fig.~\ref{fig_framework}.


\subsection{Hierarchical Autonomous Exploration}
\label{sec_hierarchical}

As shown in Fig.~\ref{fig_framework}, in our framework, the SLAM-aware path planner takes the prior topo-metric graph $\gprior$ from users as input, and outputs the path $\mathcal{P}^{*}$ as high-level guidance to the robot in exploration.
The detailed steps are as follows:
\begin{enumerate}
    \item The robot navigates to the first region (vertex) provided by the path $\mathcal{P}^{*}$;
    \item Active loop-closing: if current vertex is a loop-closing vertex, the robot will follow a segment of its previous trajectory to establish loop closures in current region.
    \item Frontier exploration: otherwise, the robot exhaustively explores frontiers within current region (corresponds to a vertex in $\gprior$);
    \item The robot navigates to the next region (vertex) in $\mathcal{P}^{*}$.
\end{enumerate}

The exploration strategy is robust to slight variations in the exact positions of vertices specified in $\gprior$, because the robot is considered to have reached a vertex $v_i$ once it enters the vicinity of $v_i$, without the need to exactly reach that vertex. 
Moreover, the robot can skip a non-loop closing vertex if the corresponding region has no frontiers.
The resulting exploration trajectory is therefore mainly motivated by exploring the unknown regions, rather than exactly following the path $\mathcal{P}^{*}$, as shown in Fig.~\ref{fig_map_results} and Fig.~\ref{fig_prior_graphs}.

\subsection{Online Prior Graph Update}

\begin{figure}[t]
\vspace{1pt}
\centering\includegraphics[width=\columnwidth]{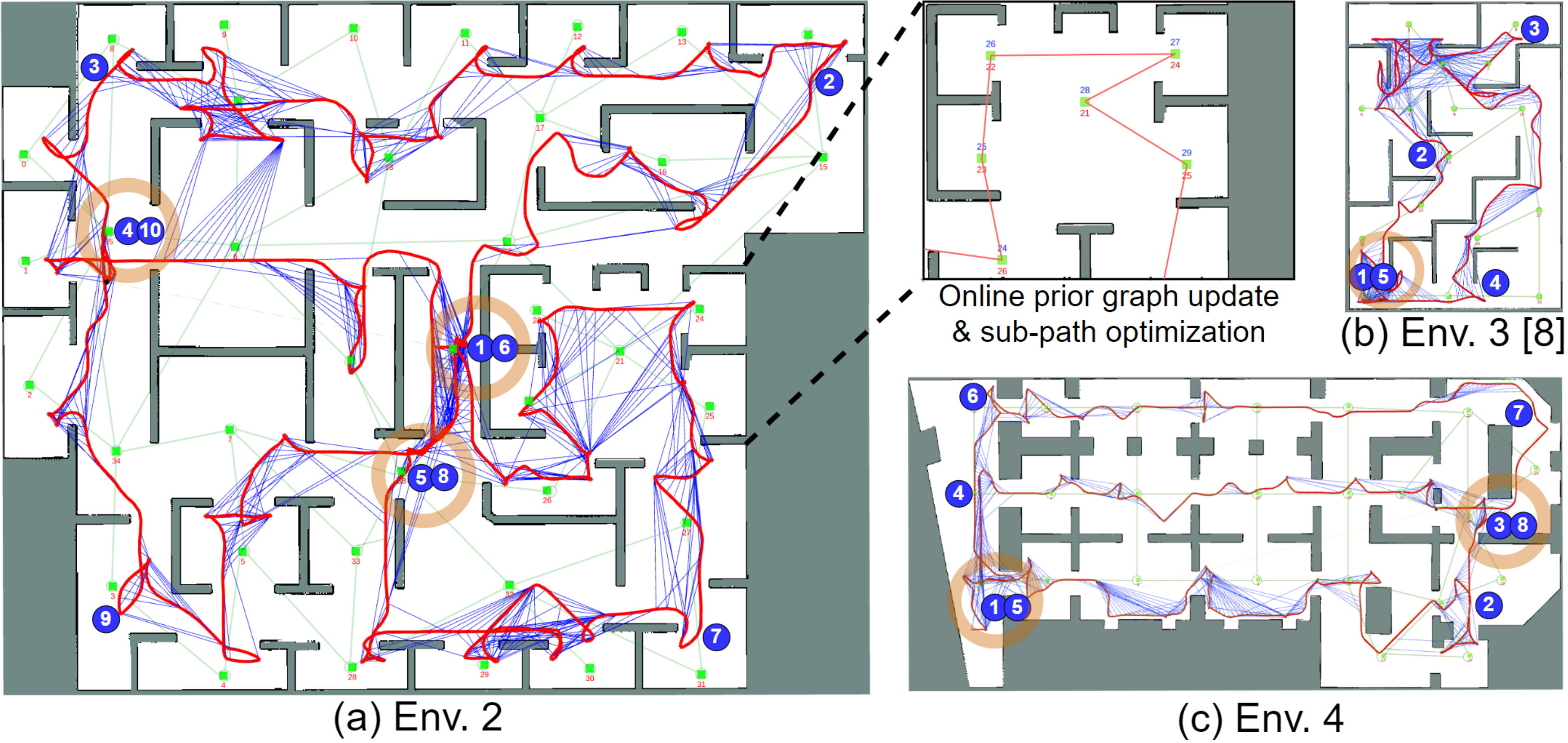}
\captionsetup{skip=0pt} 
\caption{The occupancy maps built by the proposed exploration method in various environments. 
The sizes of environment maps are: (a) $87m\times 69m$, (b) $40m \times 57m$, (c) $138m \times 66m$.
The Env.~$1$ ($74m\times 74m$) is shown in Fig.~\ref{fig_map_results}.
Each image also shows the robot's pose graph (red), loop closures (blue), the prior topo-metric graph (vertices and edges in green), and the ID of each vertex (numbers in red).
The numbers in blue circles indicate the order in which the robot visits different regions, and the loop-closing vertices are marked by orange rings.
Fig.~\ref{fig_prior_graphs}(a) also shows a zoom-in area of Env.~$2$.
The initial prior graph of Env.~$2$ only provides a star-like connectivity between the vertices.
After the online connectivity update, the sub-path optimization module finds a better sub-path for exploration.}
\label{fig_prior_graphs}
\vspace{-10pt}
\end{figure}

This section introduces the online update of the prior graph as the robot obtains more information about the environment during exploration, to benefit the path planning process.

\textbf{Degeneracy update:} 
The environmental degeneracy can be reflected by the covariance of the relative motion estimation between two poses, e.g., the covariance in the scan-matching process~\cite{zhang_degeneracy_2016, Konolige_Karto_2010}.
The robot is more likely to establish reliable loop closures in feature-rich regions that have low covariance in relative motion estimation.
After a robot covers a region (vertex) $v_i$, the degeneracy matrix of this region is calculated by the averaged covariance matrix 
of the closest $N$ edges in the robot's pose graph to vertex $v_i$ (we set $N = 5$ in our experiments).
For regions that have not been visited yet, their degeneracy matrices are defined as the averaged covariance matrix of all edges in the robot's pose graph.
We then update the covariance matrix of an edge $z_{ij}$ in the abstracted pose graph as the averaged degeneracy matrices of the two regions it connects.

\textbf{Connectivity update:} To supplement the potentially limited connectivity information in the initial prior graph, an independent thread will subscribe the SLAM-built map and run the A* algorithm to incrementally establish edges not specified in the prior graph.

\subsection{Path Replanning and Sub-Path Optimization}
\label{sec_replanning}

After performing a loop-closing action, the framework will replan the robot's path by calling the SLAM-aware path planner based on the updated prior graph $\gprior$.
The robot will follow the better path of the existing path and the replanned path in subsequent exploration.
Moreover, once the prior graph is updated, the sub-path of $\mathcal{P}^{*}$ starting from the current vertex (where the robot is located) until the next loop-closing vertex is also optimized by solving a local TSP problem, trying to find a shorter sub-path based on the updated prior graph while not affecting the loop-closing behavior of the robot.
An example is shown in Fig.~\ref{fig_prior_graphs}(a).

\section{Experiments}

We implement the proposed algorithm as an exploration plugin to the \emph{nav2d}\footnote{\url{https://github.com/skasperski/navigation_2d}} ROS package.
We use \emph{pyconcorde}\footnote{\url{https://github.com/jvkersch/pyconcorde}} as the TSP solver.
The robot trajectory error is evaluated using \emph{evo}\footnote{\url{https://github.com/MichaelGrupp/evo}}. 
All experiments are conducted on a desktop with an i9-13900 CPU and 32 GB of RAM.
The experimental environments Env.~$1$-$4$ and the associated prior topo-metric graphs are shown in Fig.~\ref{fig_map_results} and Fig.~\ref{fig_prior_graphs}.
We compare our method with the frontier-based method~\cite{Yamauchi_frontier_1997},
TSP-based method~\cite{oswald_speeding-up_2016} (not open-source and we implement our version), and the active graph-based SLAM (AGS) method in~\cite{placed_fast_2021}. 
Note the proposed method and the TSP-based method require prior topo-metric information, while others do not have such constraints.
All these implementations are open-sourced.\footnote{\url{https://github.com/bairuofei/Graph-Based_SLAM-Aware_Exploration}}


\subsection{Efficiency of Pruning Threshold in Loop Edge Selection}
\label{sec_filtering_experiment}

This section evaluates the effectiveness of the pruning thresholds in Alg.~$1$, based on randomly generated prior graphs with different areas, i.e., $10m\times 10 m$, $15m\times 15 m$, $20m\times 20 m$, $30m\times 30 m$, as shown in Fig.~\ref{fig_ratio}(a).
The graphs are derived from grid-like structures, with $5\%$ of vertices and edges randomly removed to introduce diverse topology.
We also add the Gaussian noise with zero mean and $\sigma = 0.2 m$ to the positions of vertices in prior graphs.
The edges in the prior graphs are assumed to have constant measurement covariance $\operatorname{diag}\{0.1m, 0.1m, 0.001rad\}$. 

Tab.~\ref{tab_filter_first_iteration} records the number of candidate edges before and after pruning in the first iteration of Alg.~\ref{alg_greedy}, where the distance threshold $\omega_{\text{max}}$ prunes around half of the candidate edges in most cases, and Prop.~\ref{proposition_1} further prunes over $90\%$ of the candidate edges in $\mathcal{S}$. 
The pruning ratio gets smaller as the size of the graph increases, indicating the proposed thresholds work better in large-scale environments or more fine-grained prior topo-metric graphs.
Moreover, the runtime efficiency of Alg.~$1$ with and without the proposed pruning thresholds is also shown in Tab.~\ref{tab_filter_first_iteration}, where $t^{\text{Alg.1}}_{\text{prune}}$ is almost an order of magnitude faster than $t^{\text{Alg.1}}_{\text{no-prune}}$.
Fig.~\ref{fig_ratio}(b) shows the proportion of remaining loop edges during the iteration of Alg.~\ref{alg_greedy} using the updated pruning threshold Eq.~(\ref{eq_add_edge_constraint}) in Prop.~\ref{proposition_1}, as introduced in Sec.~\ref{sec_greedy_filtering}.
In all cases, the updated threshold in Prop.~\ref{proposition_1} prunes a significant number of candidate loop edges in the first several iterations, indicating the updated threshold gets even tighter as Alg.~\ref{alg_greedy} iterates.

In addition, we also record the time spent to solve the TSP problem over the example prior graphs in Tab.~\ref{tab_filter_first_iteration}. 
Two methods are compared, a branch-and-bound (BnB) method using \emph{pyconcorde} and a greedy-based method\footnote{\url{https://github.com/dmishin/tsp-solver}}.
Although the BnB method usually outputs better TSP paths, it becomes inefficient when dealing with hundreds of vertices, in which case the greedy method can be used to get a valid sub-optimal TSP path within a second.
In the exploration of Env.~$1$-$4$, we use the \emph{pyconcorde} package which can output the TSP path within $0.1$ seconds.

\begin{table}[t]
    \setlength{\abovecaptionskip}{0cm} 
    \setlength{\belowcaptionskip}{-0.3cm}
\scriptsize
  \caption{Effectiveness of Pruning Thresholds in Alg.~\ref{alg_greedy}}
  \label{tab_filter}
  \centering
  \begin{tabular}{l|p{1.3cm}|p{1.3cm}|p{1.3cm}|p{1.3cm}}
    \hline
        Size ($m^2$) &  $10\times 10$ & $15\times 15$ & $20\times 20$ & $30\times 30$ \\
    \hline
        $|\mathcal{V}|$ & $95$ & $220$ & $395$ & $895$ \\
        $\lvert\mathcal{S}\rvert$ & $4,371$ & $22,578$ & $71,630$ & $364,214$\\
        Aft. $\omega_{\text{max}}$  & $4,058$ & $13,295$ & $29,008$ & $69,864$\\
        Aft. Prop.~\ref{proposition_1} & $329$  & $1,272$  & $2,190$ & $4,743$ \\
        Ratio & $7.53\%$  & $5.63\%$  & $3.06\%$ & $1.63\%$\\
        \hline
        $t_{\text{BnB}}^{\text{tsp}}$ (sec) &  $0.41$ & $1.28$ & $42.15$&  -\\
        $t_{\text{greedy}}^{\text{tsp}}$ (sec) &  $0.004$&$0.02$&  $0.08$ & $0.44$\\
        $t_{\text{no-prune}}^{\text{Alg.1}}$(sec) &  $0.48$ &  $5.53$ & $50.21$ & $1181.61$ \\
        $t_{\text{prune}}^{\text{Alg.1}}$ (sec) &  $0.07$ &$0.59$ & $3.01$& $21.47$\\
    \hline
  \end{tabular}
  \raggedright
  Note: "-" means cannot return a solution after $30$ minutes; $t^{\text{Alg.1}}_{\text{no-prune}}$ and $t^{\text{Alg.1}}_{\text{prune}}$, the runtime of Alg.~\ref{alg_greedy} w/wo pruning thresholds; $t_{\text{BnB}}^{\text{tsp}}$ and $t_{\text{greedy}}^{\text{tsp}}$, the TSP solving time using a branch-and-bound method and a greedy method.
\label{tab_filter_first_iteration}
\end{table}

\begin{figure}[t]
\centering
\vspace{-10pt}
\subfloat[]{\includegraphics[width=.3\linewidth]{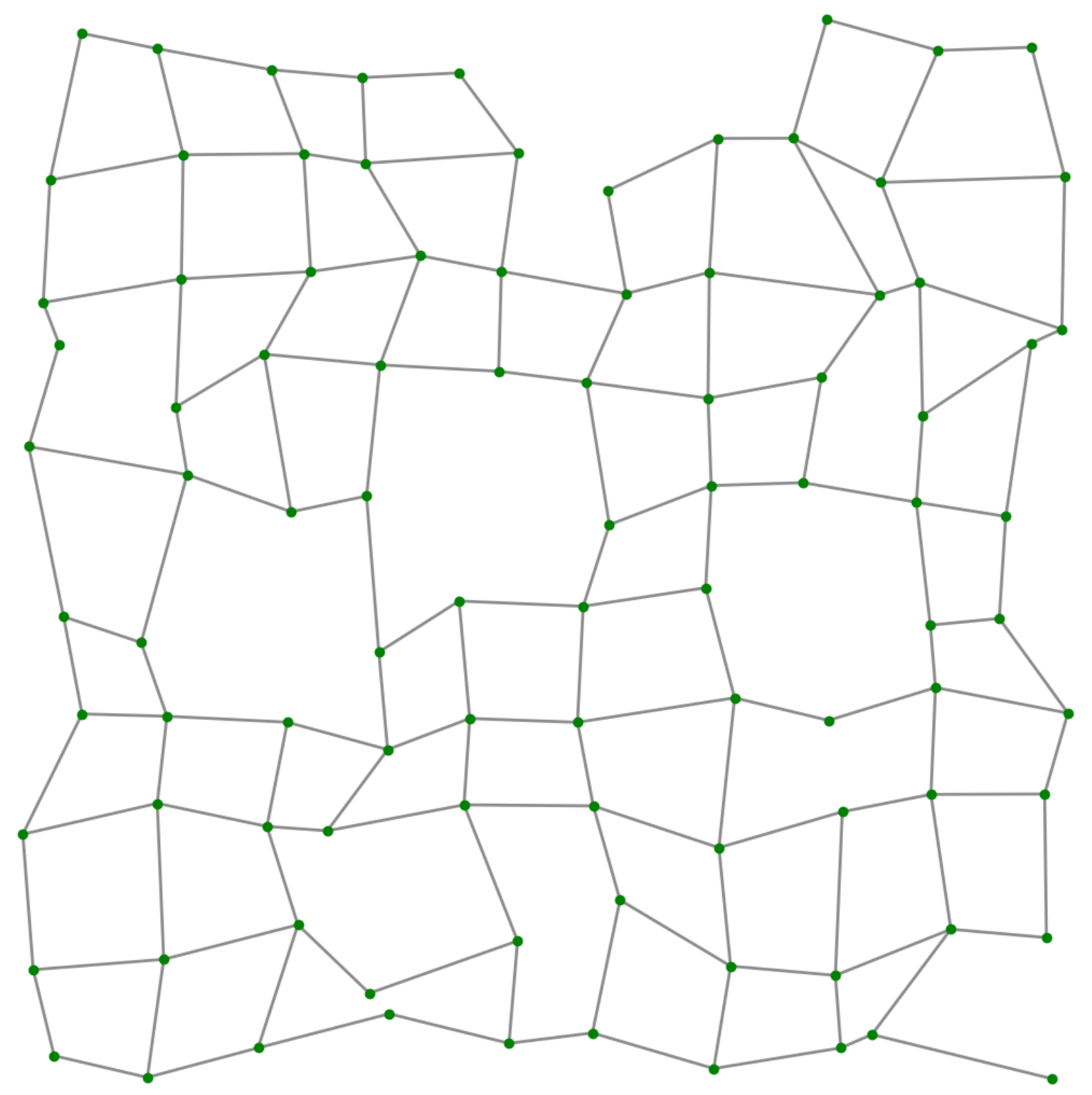}}\hspace{2pt}
\subfloat[]{\includegraphics[width=.3\linewidth]{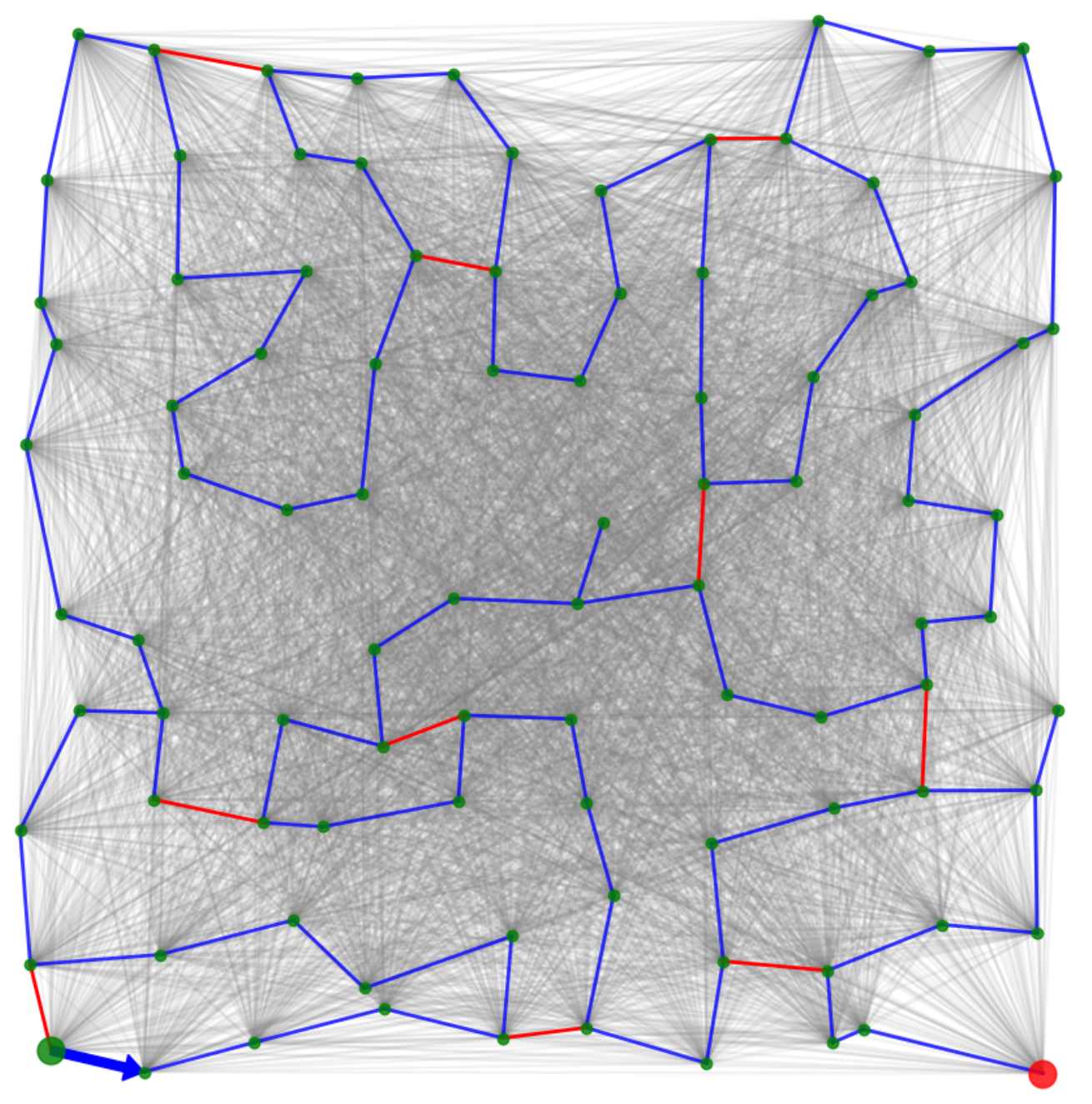}}\hspace{2pt}
\subfloat[]{\includegraphics[width=.34\linewidth]{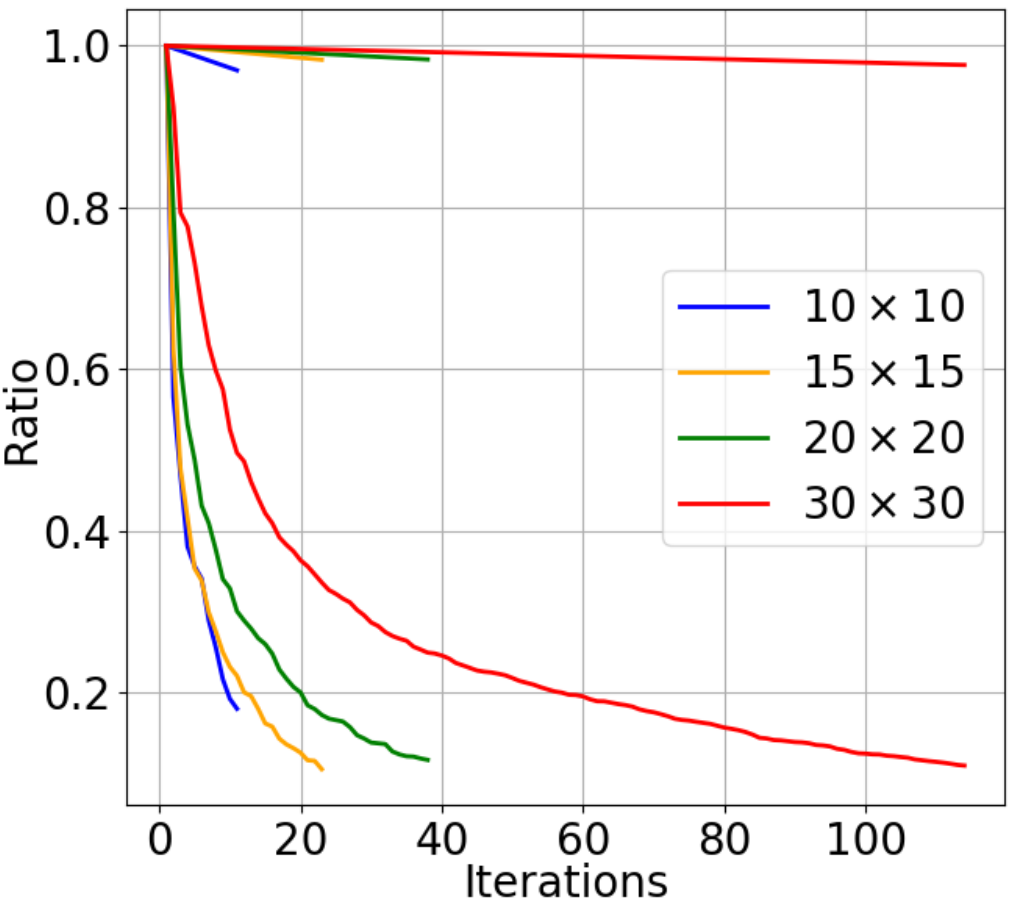}}
\caption{(a) An example topo-metric graph derived from a $10m\times 10m$ grid-like structure; (b) the initial TSP walk $\mathcal{P}^{\text{tsp}}$ (blue), the candidate (gray) and the selected (red) loop edges. (c) The ratio of the remaining candidate loop edges after the first iteration of Alg.~\ref{alg_greedy} until termination. The curves in the upper area of (c) show the corresponding cases without edge pruning.}
\label{fig_ratio}
\vspace{-15pt}
\end{figure}

\subsection{Evaluation of Exploration Performance}

This section evaluates the exploration performance of our method in Env.~1-4, as shown in Fig.~\ref{fig_map_results} and Fig.~\ref{fig_prior_graphs}.
All methods run the same SLAM method called \emph{Karto}~\cite{Konolige_Karto_2010} with both odometry estimation and loop closure detection.

Tab.~\ref{tab_exploration} records the averaged exploration results over five independent runs.
In all cases, our method has the smallest absolute pose error, and a comparable exploration efficiency (quantitated by the total distance $d_{\text{total}}$) with the frontier-based method 
except in Env. $1$, which is close to the free space and thus our method needs additional movement for active loop closing.
The TSP-based method covers the whole environment with the shortest distance; however, this comes at the cost of larger pose errors.
The active SLAM method~\cite{placed_fast_2021} gets better exploration efficiency in Env.~$3$ than our methods, and good mapping accuracy in Env.~$4$, because it evaluates the information gain and pose graph reliability when selecting frontiers.
However, as it is mainly motivated by exploring frontiers, the method fails to actively form informative loop closures to reduce the odometry error.
It is also verified in Tab.~\ref{tab_exploration} that a larger averaged node degree $k$ (i.e., more loop closures) does not always lead to a smaller pose error, while only the globally informative ones can effectively mitigate the accumulated odometry drift in exploration.

\begin{table}[t]
    \setlength{\abovecaptionskip}{0cm} 
    \setlength{\belowcaptionskip}{-0.3cm}
  \caption{Exploration Results in Different Environments}
  \tiny
  \centering
  \begin{tabular}{>{\centering\arraybackslash}p{0.2cm}|>{\centering\arraybackslash}p{0.8cm}|c|>{\centering\arraybackslash}p{0.4cm}|c|c|c}
    \hline
    Env. & Method &$n_{\text{pose}}$  & $k$ & APE / m & $t_{\text{total}}$/sec & $d_{\text{total}}$/m\\
    \hline
    \multirow{4}{*}{1} & Fron.~\cite{Yamauchi_frontier_1997} & $532\pm 17$ & $3.02$ & $0.28\pm 0.13$ & $932.72$ & $405.42$\\
                       & TSP~\cite{oswald_speeding-up_2016} & $512\pm 30$  & $2.91$ & $0.41\pm 0.35$  & $932.18$ & $385.02$\\
                       & AGS\cite{placed_fast_2021} & $821\pm 64$ & $3.34$ & $0.33\pm 0.09$  & $4077.06$ & $627.75$ \\
                       & Ours & $625\pm 19$ & $3.02$ & $\mathbf{0.20\pm 0.07}$  & $1262.58$ & $488.69$ \\
    \hline
    \multirow{4}{*}{2} & Frontier & $1062\pm 63$  & $3.86$ & $0.21\pm 0.14$  & $1428.56$ & $747.07$ \\
               & TSP & $922\pm 29$  & $3.46$ & $0.23\pm 0.24$  & $1287.14$ & $689.40$\\
               & AGS & $1499\pm 274$ & $5.51$ & $0.28\pm 0.01$  & $6568.32$ & $989.66$ \\
               & Ours & $1060\pm 58$ & $3.85$ & $\mathbf{0.19\pm 0.10}$  & $1418.04$ & $783.50$\\
    \hline
    \multirow{4}{*}{3} & Frontier & $458\pm 49$  & $3.86$ & $0.14\pm 0.09$  & $619.58$ & $349.60$\\
                   & TSP & $343\pm 52$  & $3.74$ & $0.14\pm 0.12$  & $486.38$ & $276.06$\\
                   & AGS & $377\pm 35$  & $3.45$ & $0.21\pm 0.02$  & $1255.16$ & $304.92$ \\
                   & Ours & $437\pm 43$  & $4.27$ & $\mathbf{0.11\pm 0.06}$  & $642.0$ & $348.67$\\
    \hline
    \multirow{4}{*}{4} & Frontier & $1093\pm 148$  & $3.82$ & $0.30\pm 0.18$  & $1669.66$ & $804.272$\\
               & TSP & $849\pm 58$  & $3.47$ & $0.78\pm 0.62$  & $1223.66$ & $637.89$\\
               & AGS & $1013\pm 67$  & $3.47$ & $0.29\pm 0.05$  & $4790.3$ & $848.31$ \\
               & Ours & $947\pm 53$ & $3.66$ & $\mathbf{0.24\pm 0.13}$  & $1710.18$ & $748.88$\\
    \hline
  \end{tabular}
\raggedright
Note: $n_{\text{pose}}$, the number of poses in pose graph; $k=(2\times n_{\text{edge}})/n_{\text{pose}}$, the averaged node degree in pose graph; APE, the RMSE of absolute pose error; $t_{\text{total}}$ and $d_{\text{total}}$, the total time and distance used to cover the environment.
\label{tab_exploration}
\end{table}

\begin{figure}[t]
\centering
\vspace{-10pt}
\subfloat[Env. 1]{\includegraphics[width=.241\linewidth]{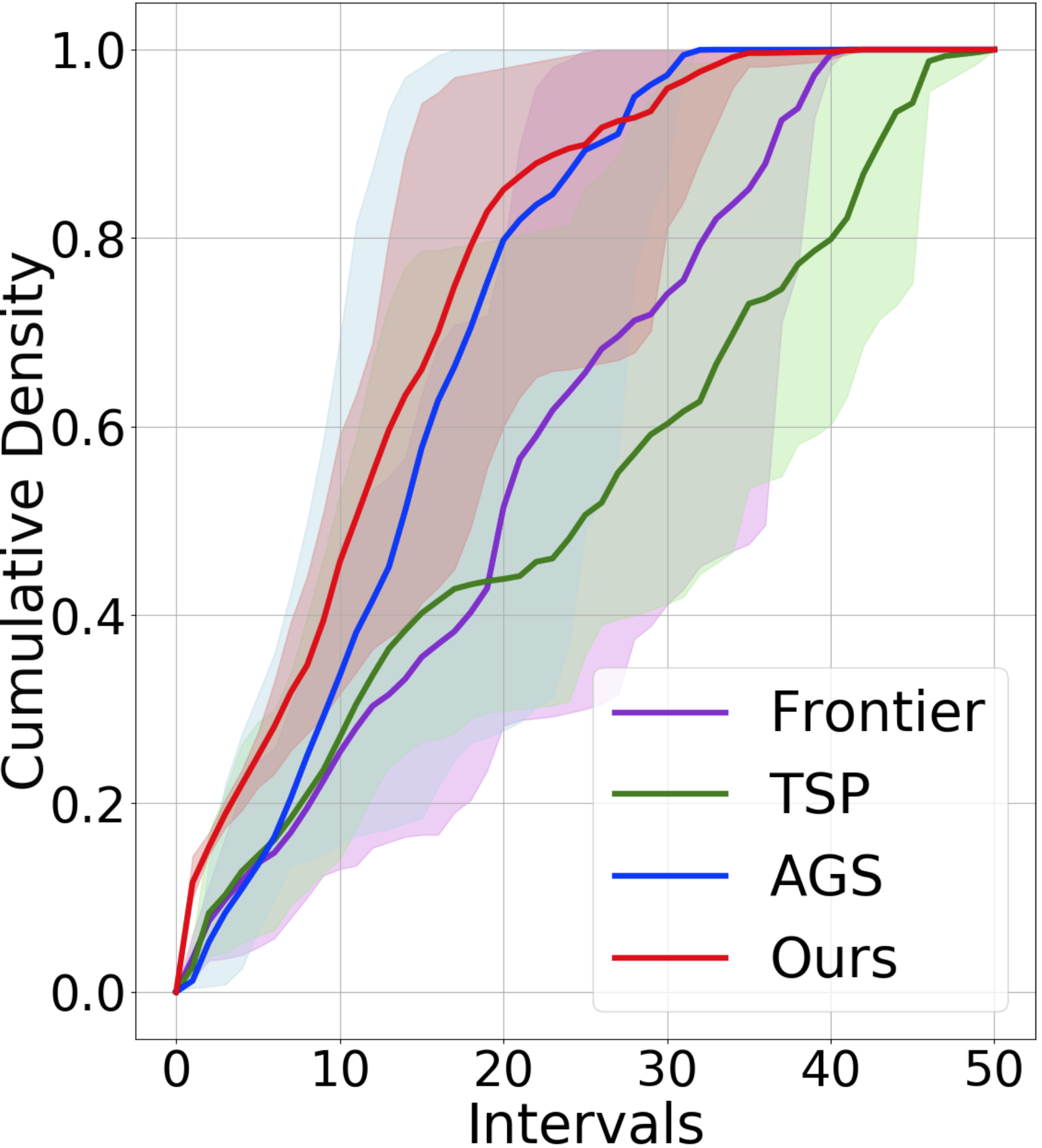}}\hspace{-0.4pt}
\subfloat[Env. 2]{\includegraphics[width=.241\linewidth]{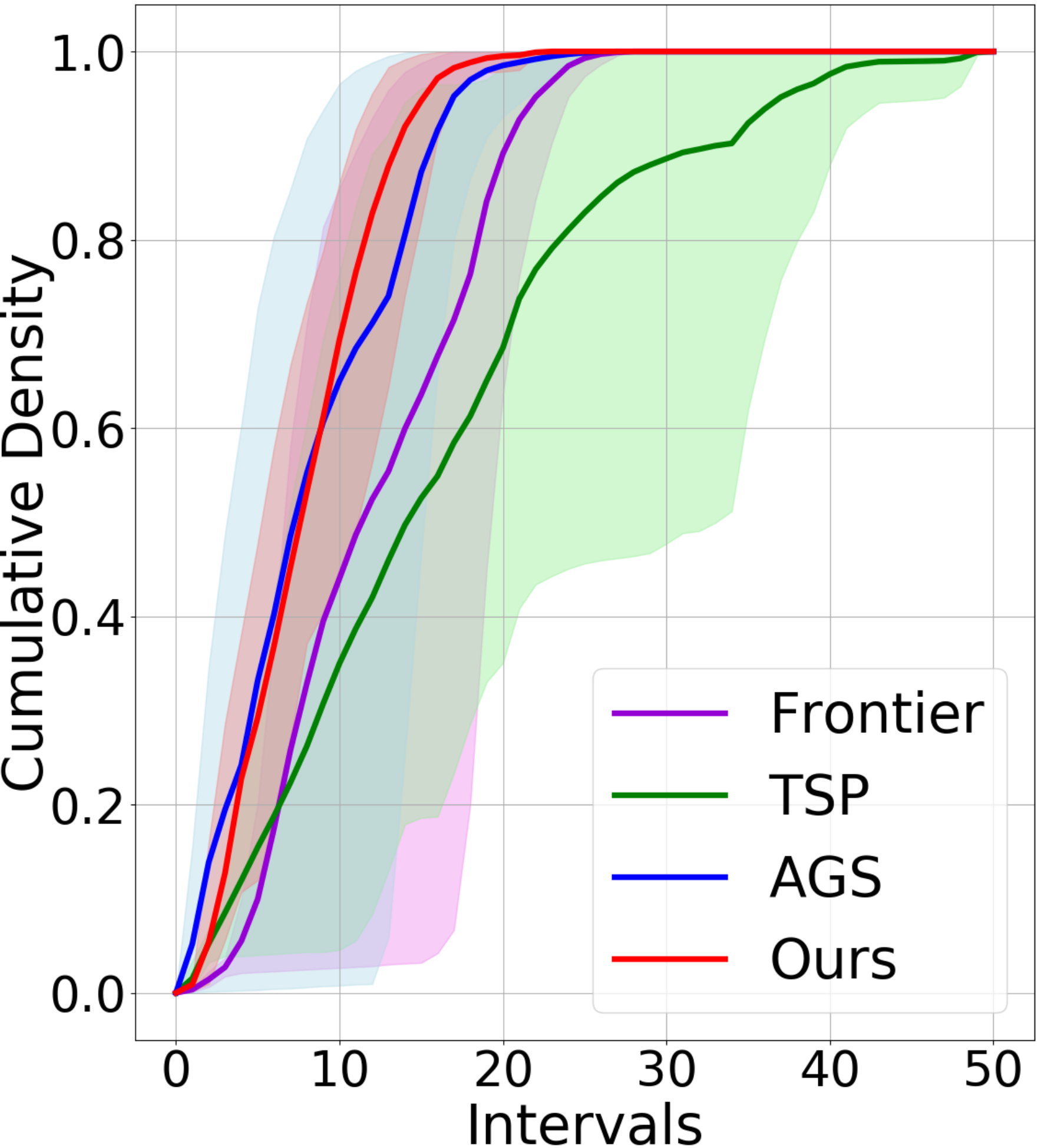}}\hspace{-0.4pt}
\subfloat[Env. 3]{\includegraphics[width=.241\linewidth]{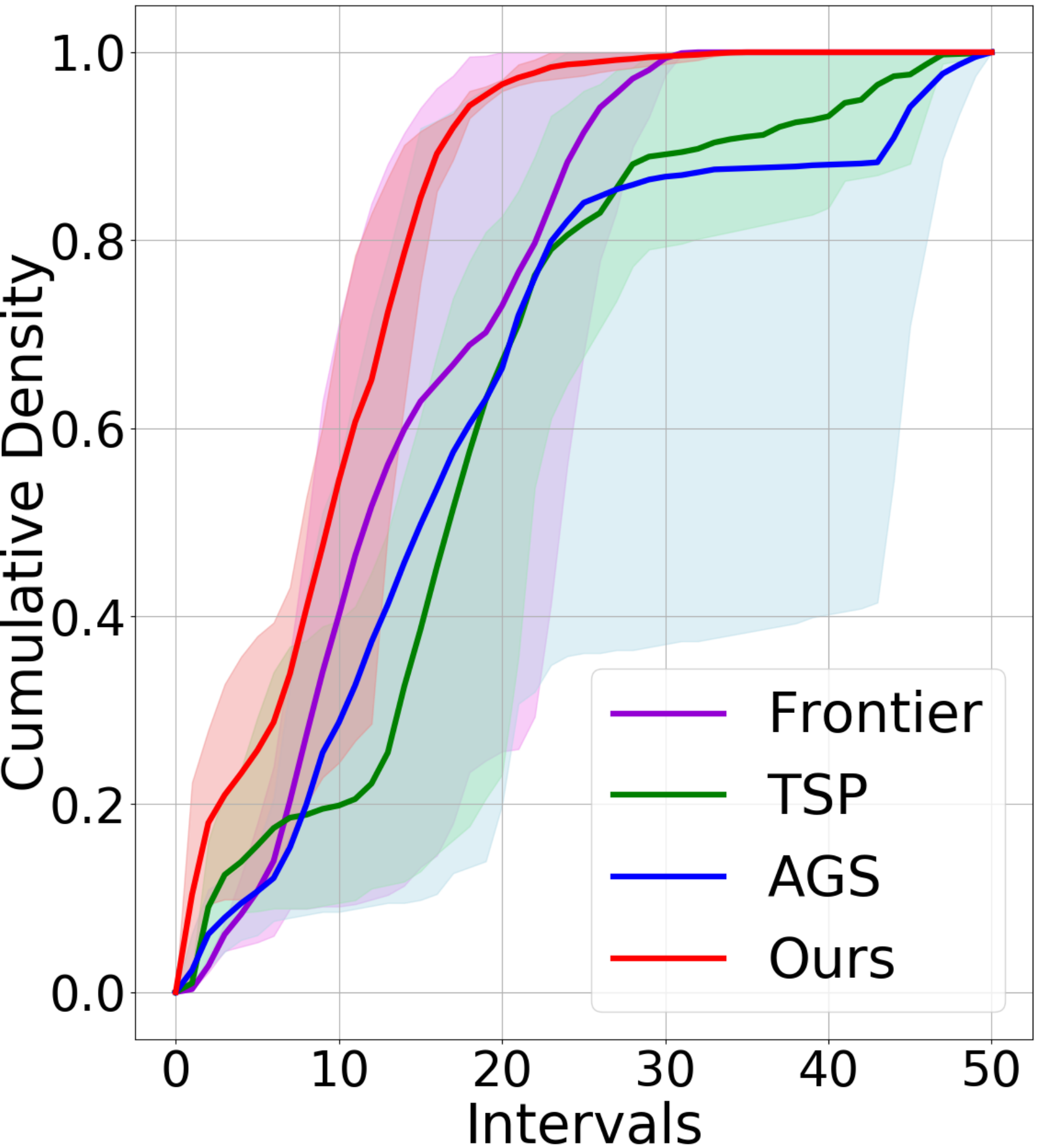}}\hspace{-0.4pt}
\subfloat[Env. 4]{\includegraphics[width=.241\linewidth]{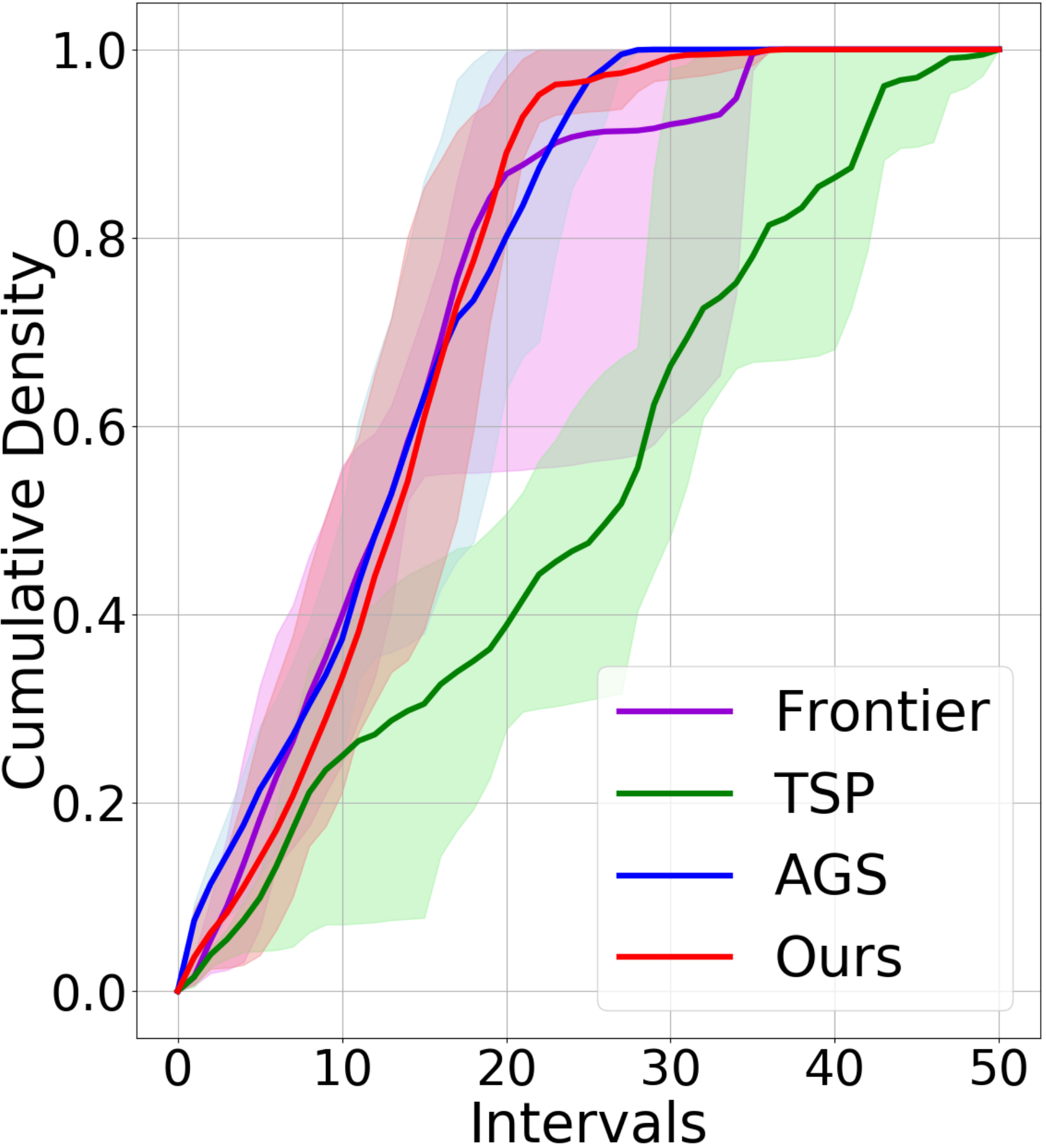}}
\caption{Cumulative density distribution of the robot poses uncertainty. Each curve (with the lower and upper bound) is the averaged result over five independent runs. The pose uncertainty is evaluated by the determinant of the covariance matrix associated with each pose (using GTSAM~\cite{gtsam}) in the pose graph, and the distribution of pose uncertainty is discretized into $50$ sub-intervals to compute its cumulative density function.}
\label{fig_cumulative}
\vspace{-15pt}
\end{figure}

\begin{figure}[t]
\vspace{1pt}
\centering
\subfloat[]{\includegraphics[width=.48\linewidth]{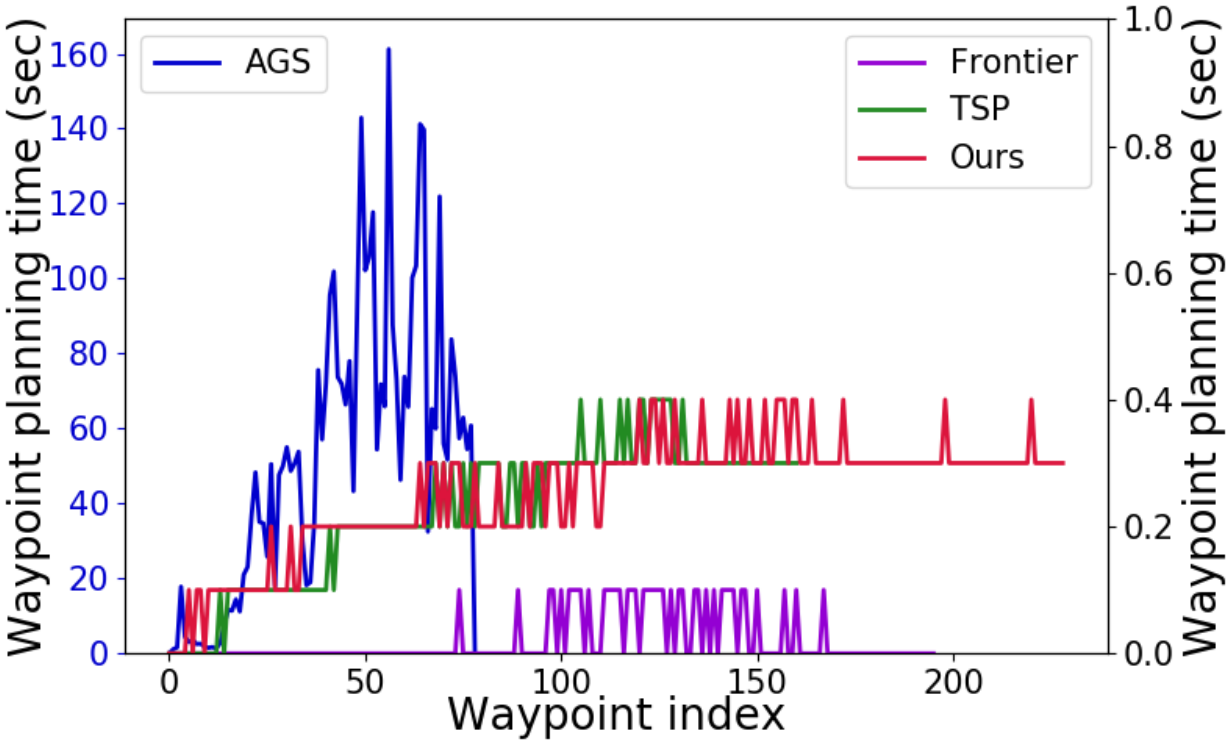}}\hspace{2pt}
\subfloat[]{\includegraphics[width=.49\linewidth]{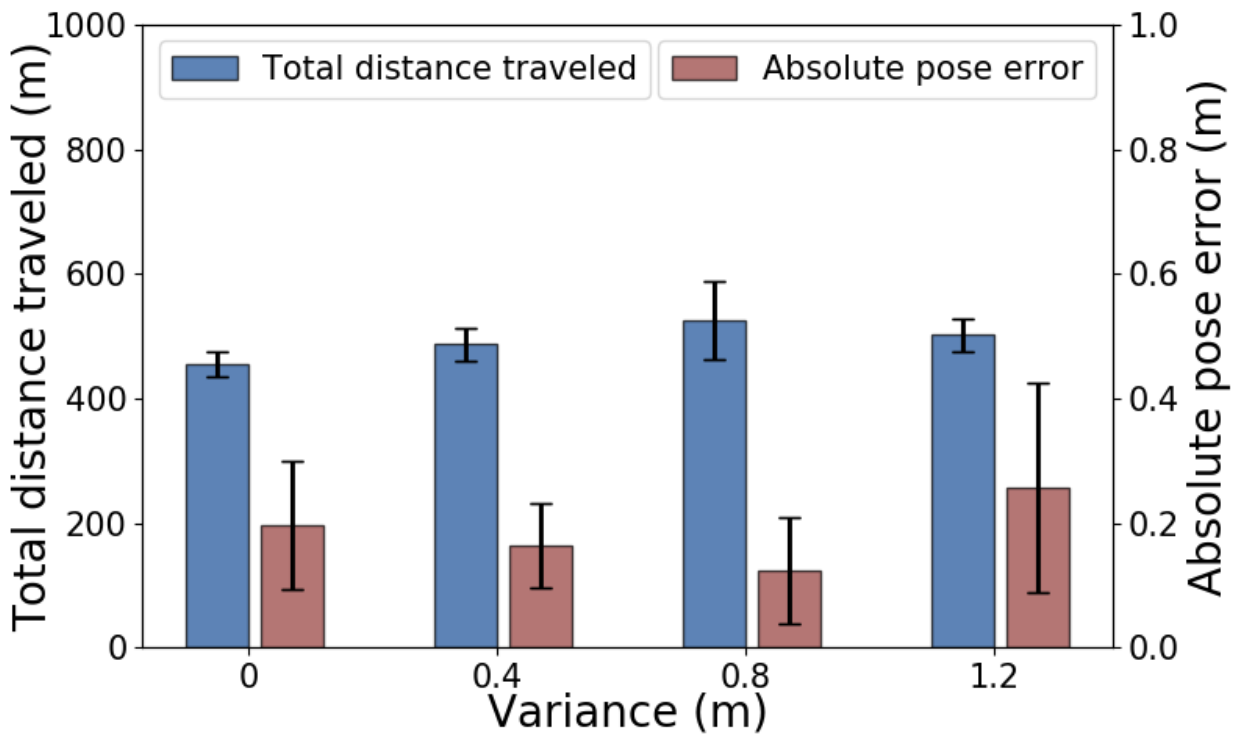}}
\caption{(a) A record of the waypoint planning time during the exploration of Env.~$1$. The left y-axis only corresponds to the AGS method, while the right y-axis corresponds to the other methods. (b) The total exploration distance and the RMSE of absolute pose error of the proposed method when Gaussian noise with variance $0.4m$, $0.8m$ and $1.2m$ are added to the positions of vertices in the prior graph of Env.~$1$. The averaged results and standard deviations over five independent runs are shown.}
\label{fig_plan_time}
\vspace{-15pt}
\end{figure}

Fig.~\ref{fig_cumulative} shows the cumulative density distributions of the pose uncertainty in the robot's pose graph.
The curve that converges faster to one indicates a higher proportion of poses have smaller estimation uncertainty.
Generally, the curves of our method converge faster to one than other methods, with obviously smaller variances among different runs.
In comparison, both the frontier-based and TSP-based methods show larger variations and converge slower.
The method~\cite{placed_fast_2021} shows a comparable uncertainty distribution to ours in Env.~$1$, $2$, and $4$, at the cost of longer distances that introduces extra odometry noise to the pose estimation as in Tab.~\ref{tab_exploration}.

Moreover, we compared the waypoint planning time during the exploration of Env.~$1$, as shown in Fig.~\ref{fig_plan_time}(a). 
The waypoints are low-level navigation goals decided after evaluating existing frontiers or loop-closing opportunities, and thus the planning time reflects the online decision-making efficiency of the compared methods.
As the active SLAM method~\cite{placed_fast_2021} needs to construct a local pose graph when evaluating each frontier, it usually takes tens of seconds to find the best one and thus is computationally inefficient.
Our method has much better runtime efficiency than~\cite{placed_fast_2021} because we only evaluate the pose graph reliability with its coarse structure. 
The runtime efficiency of our method is similar to the TSP-based method, but with additional capabilities of online prior graph updating and path replanning. 
Moreover, we evaluate the performance of the proposed method using the prior graph of Env.~$1$ with different levels of noise, as shown in Fig.~\ref{fig_plan_time}(b).
The averaged total exploration distance and the absolute pose error are less affected, verifying our exploration strategy is robust to the slight variations in the prior graph, 
as explained in Sec.~\ref{sec_hierarchical}.

\begin{figure}[t]
\centering
\subfloat[]{\includegraphics[width=.175\linewidth]{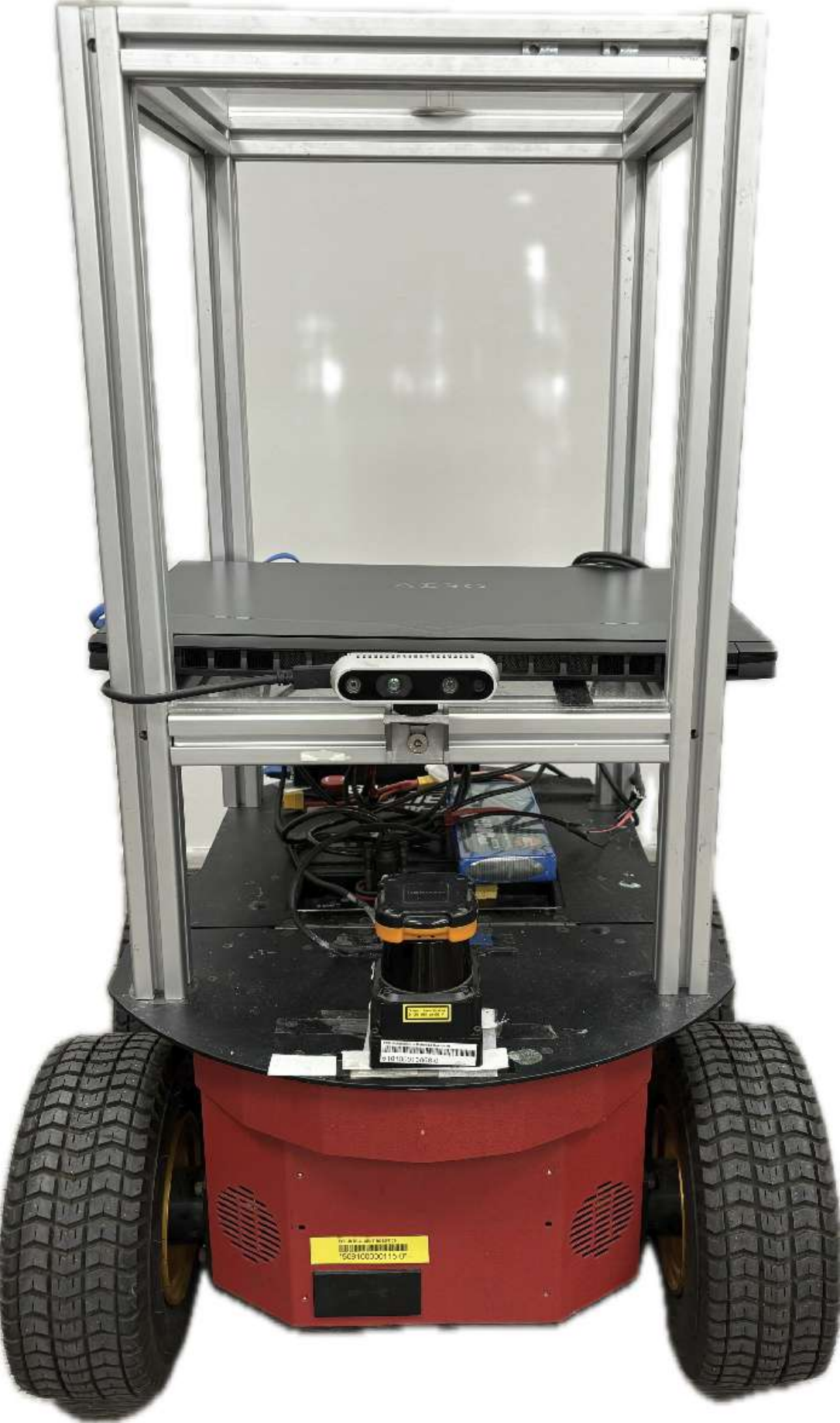}}\hspace{8pt}
\subfloat[]{\includegraphics[width=.75\linewidth]{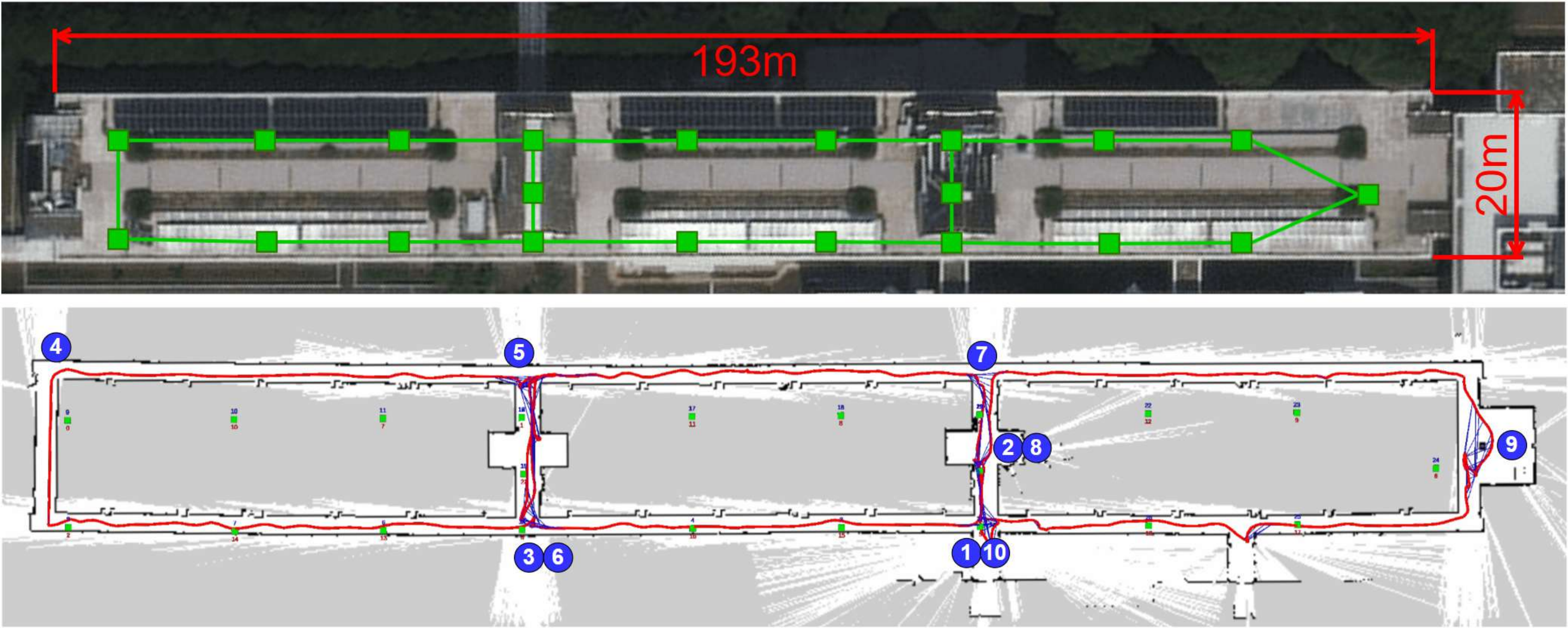}}
\caption{(a) The Pioneer3AT robot equipped with wheel encoders, a Hokuyo UTM-30LX laser, and Intel RealSense D435 camera (for visualization only); (b) Top: topview of the building from satellite image; Bottom: the SLAM-built map after exploration and the robot pose graph.}
\label{fig_real_world}
\vspace{-15pt}
\end{figure}

\subsection{Real-World Experiment with Pioneer3AT Robot}

We validated the proposed method by exploring a structured corridor environment on the NTU campus, as shown in Fig.~\ref{fig_real_world}.
The topo-metric information is obtained from the satellite map, and about half of the vertices in the prior graph are wrongly placed within obstacles.
The proposed method uses such an inaccurate prior graph, finds the path that quickly covers the environment and includes three active loop-closing actions to mitigate the odometry drift during exploration. 
The obtained map is shown in Fig.~\ref{fig_real_world} (b).
More details are included in the attached video.

\section{Conclusion}
This letter proposes a SLAM-Aware path planner for autonomous exploration based on a prior topo-metric graph of the environment, which finds a high-level path to guide the robot to quickly cover unknown regions while forming a reliable SLAM pose graph with informative loop closures.
A hierarchical exploration framework is designed to incorporate the proposed planner, with additional features including online prior graph update and path replanning.
Experimental results verify that the proposed method achieves a superior balance between mapping accuracy and exploration efficiency
in various environments.
The proposed method is specifically applicable for exploring environments with rich topology information, like industrial factories, architectural complexes, or underground mines, where the topo-metric information can be effectively exploited.
Future work is to construct the topo-metric graph online and extend it to the multi-robot case.


\bibliographystyle{ieeetr} 
\bibliography{main} 

\end{document}